\DeclareRobustCommand{\Ep}[2]{\ensuremath{\mathds{E}_{#1}\left[#2\right]}}
\DeclareMathOperator*{\argmax}{arg\,max}
\DeclareMathOperator*{\argmin}{arg\,min}
\DeclareRobustCommand{\sd}[1]{\color{black!80!white}\scriptstyle #1}
\newcommand{\mbm}{\boldsymbol{m}}
\tikzstyle{latent} = [circle,fill=white,draw=black,inner sep=1pt,
\tikzstyle{obs} = [latent,fill=gray!25]
\tikzstyle{const} = [rectangle, inner sep=0pt, node distance=1]
\tikzstyle{factor} = [rectangle, fill=black,minimum size=5pt, inner
\tikzstyle{det} = [latent, diamond]
\tikzstyle{plate} = [draw, rectangle, rounded corners, fit=#1]
\tikzstyle{wrap} = [inner sep=0pt, fit=#1]
\tikzstyle{gate} = [draw, rectangle, dashed, fit=#1]
\tikzstyle{caption} = [font=\footnotesize, node distance=0] %
\tikzstyle{plate caption} = [caption, node distance=0, inner sep=0pt,
\tikzstyle{factor caption} = [caption] %
\tikzstyle{every label} += [caption] %
\newcommand{\edge}[3][]{ %
  \foreach \x in {#2} { %
    \foreach \y in {#3} { %
      \path (\x) edge [->, #1] (\y) ;%
    } ;
  } ;
}
\definecolor{hexcolor0xbfbfbf}{rgb}{0.749,0.749,0.749}
\tikzset{>=latex}
\tikzstyle{none}   = [inner sep=0pt]
\tikzstyle{line}   = [ -, thick, shorten <=1pt, shorten >=1pt ]
\tikzstyle{arrow}  = [ ->, thick, shorten <=1pt, shorten >=1pt ]
\tikzstyle{ardash} = [ dashed, ->, thick, shorten <=1pt, shorten >=1pt ]
\tikzstyle{box} = [rectangle, minimum width=1.5cm, minimum height=1.5cm,text centered, draw=black, inner sep=7pt]
\tikzstyle{neuron} = [circle, minimum width=4mm, very thick, draw=blue!80!black]
\tikzstyle{empty}=[circle,opacity=0.0,text opacity=1.0,inner sep=0pt]
\tikzstyle{box}=[rectangle,fill=White,draw=Black]
\tikzstyle{filled}=[circle,thick,fill=hexcolor0xbfbfbf,draw=Black]
\tikzstyle{hollow}=[circle,thick,fill=White,draw=Black]
\tikzstyle{param}=[rectangle,fill=Black,draw=Black,inner sep=0pt,minimum width=4pt,minimum height=4pt]
\tikzstyle{paramhollow}=[rectangle,thick,fill=White,draw=Black,inner sep=0pt,minimum
\pgfplotsset{compat=newest}
\pgfplotsset{plot coordinates/math parser=false}
\newlength\figureheight
\newlength\figurewidth
\newlength\figureheightsmall
\newlength\figurewidthsmall
\definecolor{POSTcolor}{rgb}{0.48, 0.20, 0.58} %
\definecolor{Qcolor}{rgb}{0.00, 0.53, 0.22} %
\tikzset{
  prefix after node/.style={
    prefix after command={\pgfextra{#1}}
  },
  /semifill/ang/.store in=\semi@ang,
  /semifill/ang=0,
  semifill/.style={
    circle, draw,
    prefix after node={
      \typeout{aaa \semi@ang}
      \let\nodename\tikz@last@fig@name
      \fill[/semifill/.cd, /semifill/.search also={/tikz}, #1]
        let \p1 = (\nodename.north), \p2 = (\nodename.center) in
        let \n1 = {\y1 - \y2} in
        (\nodename.\semi@ang) arc [radius=\n1, start angle=\semi@ang, delta angle=180];
    },
  }
}
\theoremstyle{plain}
\newtheorem{theorem}{Theorem}[section]
\newtheorem{lemma}[theorem]{Lemma}
\newtheorem{definition}[theorem]{Definition}
\theoremstyle{remark}
\DeclareRobustCommand{\Ep}[2]{\ensuremath{\mathbb{E}_{#1}\left[#2\right]}}
\icmltitlerunning{Distributional Active Inference}
\begin{document}

\twocolumn[
  \icmltitle{Distributional Active Inference}

  \icmlsetsymbol{equal}{*}

  \begin{icmlauthorlist}
    \icmlauthor{Abdullah Akg\"{u}l}{sdu}
    \icmlauthor{Gulcin Baykal}{sdu}
    \icmlauthor{Manuel Hau\ss mann}{sdu}
    \icmlauthor{Mustafa Mert \c{C}elikok}{sdu}
    \icmlauthor{Melih Kandemir}{sdu}
  \end{icmlauthorlist}

  \icmlaffiliation{sdu}{University of Southern Denmark}

  \icmlcorrespondingauthor{Melih Kandemir}{kandemir@imada.sdu.dk}

  \icmlkeywords{Machine Learning, Reinforcement Learning, Active Inference}

  \vskip 0.3in
]

\printAffiliationsAndNotice{}

\begin{abstract}
Optimal control of complex environments with robotic systems faces two complementary and intertwined challenges: efficient organization of sensory state information and far-sighted action planning. Because the reinforcement learning framework addresses only the latter, it tends to deliver sample-inefficient solutions. Active inference is the state-of-the-art process theory that explains how biological brains handle this dual problem. However, its applications to artificial intelligence have thus far been limited to extensions of existing model-based approaches. We present a formal abstraction of reinforcement learning algorithms that spans model-based, distributional, and model-free approaches. This abstraction seamlessly integrates active inference into the distributional reinforcement learning framework, making its performance advantages accessible without transition dynamics modeling.
\end{abstract}

\section{Introduction}
The human brain is an experience machine \cite{clark2024experience}: a survival system that enables far-sighted planning by efficiently organizing multimodal sensory input. As the most advanced general intelligence we know, it is a inspiration for autonomous agents. Yet it remains unclear how the brain structures sensory stimuli into decision-relevant variables to solve control problems efficiently \cite{dorrell2023actionable}. Reinforcement learning (RL) formalizes far-sighted planning, and recent advances enable web-scale learning of multimodal distributions for general-purpose action generation \cite{kim2024openvla, black2024pi0visionlanguageactionflowmodel, nvidia2025gr00tn1openfoundation}. Still, we lack an account of how such representations are organized for efficient planning under computational and data constraints \cite{10947093}.

The \emph{active inference framework (AIF)} offers a process-level theory of whole-brain information organization \cite{friston2017active, parr2022active}. It posits an action--perception cycle in which neural dynamics for perception and control evolve to minimize a single objective, the expected free energy (EFE) \cite{friston2015active}. Formally, this amounts to variational Bayes on a controlled Markov process, yielding a form of model-predictive control. Its links to modern machine learning have motivated AIF-inspired adaptive control methods \cite{millidge2020relationship, tschantz2020reinforcement,millidge2021applicationsfreeenergyprinciple, lanillos2021active, schneider2022active, malekzadeh2024active}, but in practice these efforts have largely rederived familiar information-theoretic exploration heuristics (e.g., information gain) \citep{houthooft2016vime, sukhija2025maxinforl}. To date, AIF has not delivered major practical gains in state-of-the-art RL.

Observing that the brain is an efficient information-processing system, we investigate whether AIF is useful in situations where high-fidelity forward simulation is infeasible. \emph{Distributional RL} \citep{bellemare2017distributional,bellemare2023distributional} captures the full return distribution, which encodes information about future state transitions that would otherwise require an explicit forward model. This makes distributional RL a natural vehicle for incorporating AIF without the cost of learning an explicit transition model, which we pursue here. We achieve this goal in three steps. Firstly, we analyze AIF strictly through the lens of variational Bayesian and causal inference, reconstructing its formulation by applying a mechanistic deduction chain over a coherent set of formal entities. By expressing the prior construction via do-calculus \citep{pearl1995causal}, we show that the standard objective admits a simpler equivalent of its commonly adopted form.  

Secondly, we formalize a new theoretical framework, called \emph{push-forward RL}, that explains the return distribution as pushing the trajectory measure of a specific policy-induced state transition kernel forward with a return functional. This recipe enables us to relate Bellman updates to the transition kernels implied by the resulting push-forwards of trajectory measures, thereby connecting model-based and model-free views of policy iteration. We use this connection to embed active inference into the distributional RL framework.  

Lastly, we exploit our theory to design a general-purpose policy optimization algorithm called \emph{Distributional Active Inference (DAIF)}. DAIF is an intuitive and easily implementable extension of distributional RL. \cref{fig:overview} provides a conceptual overview of the framework. DAIF performs temporal-difference quantile matching on a probabilistic embedding space constructed by a state-action amortized parametric distribution. Across a broad suite of tabular and continuous-control tasks, this simple modification delivers substantial performance gains. The results support our view that AIF is particularly powerful when the agent has limited computational capabilities, mirroring the conditions of the biological brains it is intended to explain.

\paragraph{\bf Notation.} We denote probability measures by $P$, and by $\mathcal P_{\mathcal X}$ the set of all probability measures on the measurable space $(\mathcal{X},\sigma(\mathcal{X}))$ for a sigma-algebra $\sigma(\mathcal{X})$. Marginalization of a random variable and conditioning on a point observation $z$ are denoted as 
\begin{equation*}
P_{X|Y} P_{Y|z} \triangleq \int P_{X|Y}(\cdot|Y)P_{Y|Z}(dY|z) = P_{X|z}.   
\end{equation*}
We suppress the conditioning on variables that are integrated out, e.g., $P_X P_Y$ instead of $P_{X|Y} P_Y$. By $F_P$ and $f_P$ we denote the cumulative distribution function and probability density function of probability measure $P$, respectively. The $p$-Wasserstein distance between two measures $P, \bar{P}$ is 
\begin{equation*}
\mathcal{W}_p(P,\bar{P}) \triangleq ( \inf_{\nu \in \Gamma(P,\bar{P})} \mathbb{E}_{(x,x') \sim \nu}[|x-x'|^p] )^{1/p}, 
\end{equation*}
where $\Gamma(P,\bar{P})$ denotes the set of couplings. 
We write $\delta_x$ for the Dirac measure at $x$, defined by ${\delta_x(A) = \mathbf{1}(x\in A)}$ for measurable set $A$. 
The $\mathrm{do}(X \sim P_{\mathrm{new}})$ operator denotes a stochastic intervention \citep{pearl1995causal} that replaces the distribution of variable $X$ in a structural causal model by $P_{\mathrm{new}}$. See \cref{tbl:notation} in the appendix for the full notation.

\paragraph{\bf Problem Statement.} We consider controlled Markov processes defined by a transition kernel $P(X'|X,A)$ where $X,X' \in \mathcal{X}$ denote the current and next environment states, and $A \in \mathcal{A}$ indicates the action taken by the agent, with an unknown transition kernel $P_*$. The agent has a reward function $R:\mathcal{X} \times \mathcal{A} \rightarrow [0,R_{\text{max}}]$, a \emph{world model} $P_W(X,S): \mathcal{X} \times \mathcal{S} \rightarrow [0,1]$ that defines a joint distribution over environment states and auxiliary variables $S$ in a latent embedding space $\mathcal{S}$, and a policy $\pi: \mathcal{X} \rightarrow \mathcal{A}$. The agent seeks to maximize its adaptation to the environment by updating $P_W$ and $\pi$ through its interactions with the environment which evolves according to $P_\pi^*$. The agent updates its policy $\pi$ by estimating the discounted sum of rewards, called the \emph{return functional}, after taking action $a_0$ in state $x_0$  and following a fixed policy $\pi$:  
\begin{equation*}
{\bf G}_{x_0,a_0}^\pi(\omega) \triangleq R(x_0,a_0)+\sum_{t=1}^\infty \gamma^t R(x_t, \pi(x_t))
\end{equation*}
for some discount factor $\gamma \in (0,1)$  and a trajectory ${\omega \triangleq (x_1,x_2,\dots)  \in \mathcal{X}^{\mathbb{N}_+}}$.

\begin{figure}[t!]
    \centering
    \includegraphics[width=0.49\textwidth]{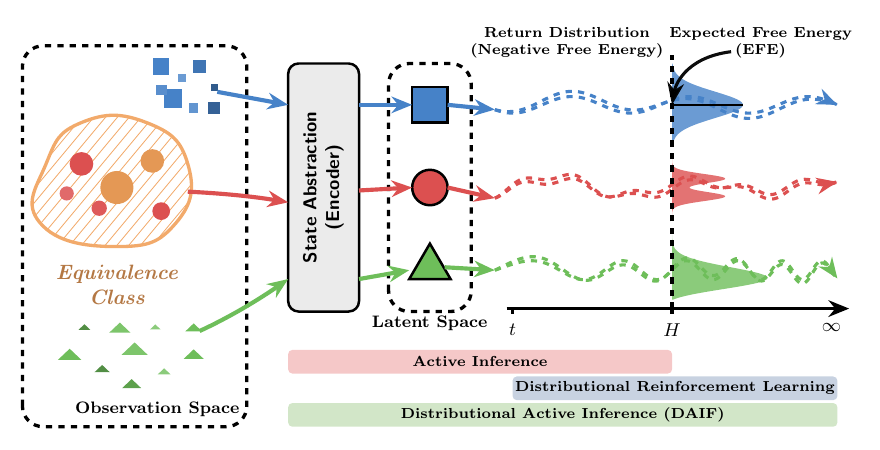}
    \caption{Conceptual overview of the distributional active inference (DAIF) framework. Similar states are grouped into abstract states on a latent state space via a probabilistic encoder. Infinite-horizon planning is performed on the latent state space using \emph{distributional reinforcement learning}. The objective function of the resulting pipeline is derived from the principles of \emph{active inference}.}
    \label{fig:overview}
\end{figure}

\section{A Rigorous Formulation of Active Inference}

Active inference has substantial empirical support as a meta-level process theory for cognitive neuroscience \cite{hodson2024empirical}. However, its algorithmic details and probability-theoretic implications allow diverse interpretations. We will next derive the AIF training objective from the first principles of Bayesian and causal inference. The outcome yields important simplifications, which paves the way for the integration of AIF into distributional RL.

Let the agent's world model be
\begin{equation*}
   P_W(X,Y,S) \triangleq P_D(X|Y,S)P_0(Y,S), 
\end{equation*}
with a prior belief over the latent embeddings $P_0(Y,S)$ and a decoding likelihood $P_D(X|Y,S)$. Variable $Y$ represents actions and variable $S$ represents perceptions. The Evidence Lower Bound (ELBO) functional for observation $x$ and approximate posterior $P_Q(Y,S)$ follows from the standard machinery of variational inference
\begin{align*}
    &L(x,P_W,P_Q) = \\
    &\quad \mathbb{E}_{y,s \sim P_Q}\Big [\log \big ( P_D(x|y,s) P_0(y,s) / P_Q(y,s) \big ) \Big ].
\end{align*}
Maximizing this functional with respect to $Q$ minimizes the Kullback-Leibler divergence $\mathrm{KL}(P_Q(Y,S)||P(Y,S|x))$, as this quantity equals the Jensen gap between $\log P(X)$ and $L(x,P_W,P_Q)$. AIF is defined as an extension of variational inference to control problems in which the transition kernel and the policy are jointly optimized with the same ELBO. AIF aims to solve ${P_Q := \arg \max_{\bar{P}_Q}  L(x,P_{W},\bar{P}_Q)}$
with 
\begin{align*}
&f_{P_{0}}(Y:=y) \propto \exp(\mathbb{E}_{x' \sim P_{W|\mathrm{do}(Y \sim \delta_y, S \sim P_Q )}}[\\
& \hspace{9em} L(x',P_{W|\mathrm{do}(Y \sim \delta_y, S \sim P_Q)},P_Q)]) 
\end{align*}
where the intervention $\mathrm{do}(S \sim P_Q)$ is intended to incorporate the effect of past observations on future predictions. 
Up to a sign, the exponent above coincides with the expected free energy (EFE) of conventional active inference \citep{friston2015active}; we adopt the ELBO formulation here for closer alignment with the variational-inference machinery used in the rest of the paper.
A model equipped with such a prior would favor a world model whose predicted observations best fit it.  This principle is known in cognitive neuroscience as \emph{predictive coding} \citep{rao1999predictive, friston2009free}.
AIF implements a particular way of predictive coding by intervening into the world model used in the ELBO with a \emph{desired state distribution} $P_R(X)$, which plays the role of the reward function in RL. The canonical AIF prior construction applies the product rule over the approximate world model in the reverse direction
\begin{align*}
    P_{W|\mathrm{do}(Y,S \sim P_Q)}(X,Y,S) &= P_D(X|Y,S)P_Q(Y,S) \\ 
    &= \widetilde{P}(Y,S|X) \widetilde{P}(X)
\end{align*}
which yields an approximate posterior $\widetilde{P}(Y,S|X)$ and the corresponding prior $\widetilde{P}(X)$. The prior $\widetilde{P}(X)$ is then intervened by  $P_R$. We now point to an overlooked matter in the AIF literature. Due to the equality imposed by the product rule, the intervention on $X$ yields (see \cref{sec:factorization_appx}):
\begin{align}
&P_{\widetilde{W}}(X,Y,S) \triangleq \label{eq:factorized_qz}\\
&\quad P_{W|\mathrm{do}(Y,S \sim P_Q, X \sim P_R)}(X,Y,S) =  P_Q(Y,S) P_R(X)\nonumber
\end{align}
and disconnects the observable $X$ from the latents $(Y,S)$. The key consequence is that the intervened distribution of $(Y,S)$ becomes independent from $X$, eliminating the effort discussed in prior work about the inference of $\widetilde{P}(Y,S|X)$ \cite{da2020active}. Substituting this outcome into the prior construction developed above, we obtain
\begin{align*}
f_{P_{0}}(Y:=y) \propto \exp (\mathbb{E}_{x' \sim P_D P_{Q|y}}  [  \log P_R(x') ] )
\end{align*}
which yields the complete AIF objective below
\begin{align}
\begin{split}
    &L(x,P_W,P_Q) = \\
    &\quad \mathbb{E}_{y,s \sim P_Q}[\log  P_D(x|y,s) ]  + \mathbb{E}_{y,s \sim P_Q}[\log P_0(s|y)]  \\
    &\quad + \mathbb{E}_{x' \sim P_D P_Q}  [  \log P_R(x') ]+ \mathbb{H}[P_Q]. 
\end{split}\label{eq:aif_elbo_small}
\end{align}
See \cref{sec:aif} for the intermediate derivation steps.

Next, we connect the minimalist derivation presented above to the control setting. We exploit the factorization in Eq.\ \ref{eq:factorized_qz} to simplify the ELBO further. The world model of an AIF agent comprises a policy variable $\pi$ to represent actions and the latent embedding $(S,S')$ that represents the current perceptions by $S$ and the future perceptions by $S'$. AIF theory assumes that these variables factorize as:
\begin{align*}
P_W(X', S', &S,\pi|X,A) \triangleq\\
&\quad P_D(X' | S') P_M(S'|S,A) P_E(S|X) P_A(\pi)
\end{align*}
where $P_M$ is a latent state transition kernel, $P_E: \mathcal{X} \rightarrow \mathcal{P}_{\mathcal{S}}$ is an encoder that maps observations to latent perceptions, $P_D: \mathcal{S} \rightarrow \mathcal{P}_{\mathcal{X}}$ is a decoder that maps in the reverse direction, and $P_A$ is a policy distribution. 
Note that the generic action and perception variables $(Y,S)$ from the preceding derivation are specialized here to the control-specific $(\pi, (S,S'))$; subsequent occurrences of $P_W$ and $P_Q$ in this section refine---rather than redefine---these earlier objects.
Let us assume an approximate posterior
\begin{align*}
P_Q(S,S',\pi) \triangleq P_M(S'|S,A) P_E(S|X) P_A(\pi |X)    
\end{align*} 
that shares the same transition kernel and encoder with the world model. Such a choice of approximate posterior is standard in the probabilistic state-space modeling literature \citep{doerr2018probabilistic} and is used to eliminate the latent overshooting term in the Dreamer model family \citep{hafner2020dreamer, hafner2025mastering}. With this approximate posterior, the term $\mathbb{E}_{y,s \sim P_Q}[\log P_0(s|y)]$ developed in Eq.~\ref{eq:aif_elbo_small} drops and we get:
\begin{align}
\begin{split}
    L(x, a, x', P_W, P_Q) &= \\
    &\hspace{-7em} \mathbb{E}_{S' \sim P_M P_E | x, a} [\log P_D(x' | S')]\\
    &\hspace{-7em} + \mathbb{E}_{x'', \pi \sim P_D P_M P_E P_A | x'} \Big[ \log P_R(x'' | x', \pi(x')) \Big] \\
    &\hspace{-7em} + \mathbb{H}[P_{A|x'}]. 
\end{split} \label{eq:full_aif_elbo} 
\end{align}
In RL terms, this ELBO prescribes a Dyna-style model-based approach. The first term learns the state transition dynamics from observed triples $(x,a,x')$. The remaining two terms can be viewed as maximum-entropy policy search \citep{haarnoja2018soft, haarnoja2018soft_apps} on the estimated transition model. The formulation can be straightforwardly extended to the case where $x''$ is an arbitrarily long trajectory. 

\section{Push-forward Reinforcement Learning}
\label{sec:push_forward_rl}

We project AIF onto the distributional RL framework via a model-based extension of policy iteration presented in \cref{alg:mbpi}, which augments standard policy iteration with a model update step. It performs policy updates by Bellman backups using the following operator, where the backup transition kernel $P$ comes from the estimated model
\begin{align}
\begin{split}
    (T^\pi_P Q) & (x,a) \triangleq \\
          &R(x,a) +\gamma \mathbb{E}_{x' \sim P(\cdot|x,a)}[ Q(x',\pi(x')) ] \label{eq:generalized_bellman_operator} 
\end{split}
\end{align}
where $Q:\mathcal{X} \times \mathcal{A}\rightarrow \mathbb{R}$ is an arbitrary function. $\mathcal{U}(\mathcal{X})$ in \cref{alg:mbpi} is an idealized full-coverage assumption; $x$ is drawn from the state distribution induced by the data (e.g., under a behavior policy). MBPI is best seen as a template that highlights design choices shared by modern model-based RL. For instance, PSRL samples an MDP from the posterior and solves it \citep{osband2013more}; PILCO combines model learning with policy search \citep{deisenroth2011pilco}; and Dreamer couples model learning with actor--critic updates \citep{hafner2020dreamer}. We suppress the policy entropy maximization in Eq.\ \ref{eq:full_aif_elbo} for brevity.
\begin{algorithm}[t!]
     \caption{Model-Based Policy Iteration (MBPI)}
     \label{alg:mbpi}
    \begin{algorithmic}
    
    \STATE {\bfseries Input:} $\pi_0, i:=0$
    \WHILE{True}
       \STATE \textcolor{gray}{Model Update:}
       \STATE $~~P_{i+1} := \arg \max_P \{$
       \STATE $\qquad \mathbb{E}_{x \sim \mathcal{U}(\mathcal{X})} \mathbb{E}_{x' \sim P_*(\cdot|x,\pi_i(x))}[\log f_P(x'|x,\pi_i(x))]\}$    
       \STATE \textcolor{gray}{Policy Evaluation:}
       \STATE $~~Q_{i+1} :=\argmin_Q \mathbb{E}_{x \sim \mathcal{U}(\mathcal{X})} \Big [\big(T_{P_{i+1}}^{\pi_i} Q(x,\pi_i(x)) $ 
       \STATE $\hspace{14em} - Q(x,\pi_i(x)) \big)^2 \Big ]$
       \STATE \textcolor{gray}{Policy Improvement:}
       \STATE  $~~\pi_{i+1} := \argmax_\pi ~ \mathbb{E}_{x \sim \mathcal{U}(\mathcal{X})} \Big [  Q_{i+1} (x,\pi(x)) \Big ]$
       \STATE $i := i+1$
    \ENDWHILE
    \end{algorithmic}
\end{algorithm}
With finite samples and finite backups, the algorithm inherits the properties of Approximate Policy Iteration (API), a two-error scheme \citep[][Prop.~6.2]{bertsekas1996neurodynamic}: each iteration incurs (i) a policy-evaluation error $\varepsilon$ and (ii) an approximate greedification error $\delta$, yielding asymptotic suboptimality of order $\big(\delta + 2\gamma \varepsilon\big)/(1-\gamma)^2$. In API, $\varepsilon$ and $\delta$ are controlled from data; worst-case guarantees such as $|Q(x,a) - \mathbb{E}[{\bf G}_{x,a}^\pi]|_\infty \le \varepsilon$ demand accurate estimates over a large set of states, so the total sampling burden is dominated by the size of that set. For high-dimensional state spaces, $|\mathcal{X}|$ can grow rapidly with dimension under discretization, motivating learned state abstractions that group states with similar transition dynamics in a latent embedding \citep{li2006unified, abel2016near, abel2018state}.

As the second step, we generalize the distributional RL framework to the case in which the transition dynamics are explicitly identified as an intermediate step and then pushed forward on the trajectory measures. Approaching RL from a probabilistic perspective, this decomposition will pave the way to integrate AIF into the distributional RL framework. Since the return functional ${\bf G}_{x,a}^\pi(\omega)$ operates on a state-action trajectory $\omega$, its distribution (therefore expectation) can be expressed through a probability measure over trajectories. We construct such a measure below. 

Let $P^\pi(B|x) \triangleq P(B|x, \pi(x))$ be the Markov kernel induced by a policy $\pi : \mathcal{X} \to \mathcal{A}$ for $B \in \sigma(\mathcal{X})$. By the Ionescu-Tulcea theorem~\citep{ionescu1949mesures}, the iterated application of $P(\cdot|x_0, a_0)$ followed by $P^\pi$ extends uniquely to a probability measure $\mathbb{P}^{P^\pi}_{x_0, a_0}$ on the path space $(\mathcal{X}^{\mathbb{N}_+}, \sigma(\mathcal{X})^{\otimes \mathbb{N}_+})$ that is consistent with the finite-dimensional kernel products on every cylinder set; see \cref{app:markov-process-measure} for the explicit construction. We refer to objects like $\mathbb{P}^{P^\pi}_{x_0, a_0}$ as \emph{Markov process measures}.

Let $(E,\sigma(E))$ be a measurable space and $f: \mathcal{X} \to E$ be a measurable map. We extend this to the infinite-horizon path space by defining the sequence-valued map $\mathbf{F}: \mathcal{X}^{{\mathbb{N}_+}} \to E^{{\mathbb{N}_+}}$ as $ \mathbf{F}(\omega) = (f(x_1), f(x_2),  \dots)$ where $\omega = (x_1, x_2, \dots)$ is an element of $\mathcal{X}^{{\mathbb{N}_+}}$. We refer to objects like $\mathbf{F}$ as \emph{push-forward process functionals}. Pushing a Markov measure forward with this functional, we get a measure on the observation path space $(E^{{\mathbb{N}_+}}, \sigma(E)^{\otimes {\mathbb{N}_+}})$:
\begin{align*}
    {\bf F}_{\#}  \mathbb{P}_{x_0,a_0}^{P_\pi} &\triangleq  \mathbb{P}_{x_0,a_0}^{P_\pi}(\mathbf{F}^{-1}(B)) \\
    &=  \mathbb{P}_{x_0,a_0}^{P_\pi} \{ \omega \in \mathcal{X}^{{\mathbb{N}_+}} : \mathbf{F}(\omega) \in B \}
\end{align*}
for any measurable set $B \in \sigma(E)^{\otimes {\mathbb{N}_+}}$. Under this definition, the measure $(\mathbf{G}_{x_0,a_0}^\pi)_{\#} \mathbb{P}_{x_0,a_0}^{P_\pi}$ corresponds precisely to the \emph{return distribution} on the measurable space $(\mathbb{R}, \mathcal{B}(\mathbb{R}))$. For any Borel set $B \in \mathcal{B}(\mathbb{R})$, this measure can be expressed as:
\begin{align*}
    \mathbb{P}_{x_0,a_0}^{P_\pi} &\Big( \Big\{ \omega \in \mathcal{X}^{{\mathbb{N}_+}} : \\
    &R(x_0,a_0) + \sum_{t=1}^{\infty} \gamma^t R(x_t,\pi(x_t)) \in B \Big \} \Big ).
\end{align*}
The return distributions induced by two different Markov kernels $P$ and $P'$ can then be expressed as $(\mathbf{G}_{x_0,a_0}^{\pi})_{\#} \mathbb{P}_{x_0,a_0}^{P_\pi}$  and $(\mathbf{G}_{x_0,a_0}^\pi)_{\#} \mathbb{P}_{x_0,a_0}^{P'_\pi}$, respectively. 

We define the distributional counterpart of the generalized Bellman operator introduced in Eq.\ \ref{eq:generalized_bellman_operator} with a specific Markov kernel $P$ as follows:
\begin{align*}
    {\bf T}_P^{\pi} \eta_{x,a} \triangleq R(x,a) + \gamma\mathbb{E}_{x' \sim P(\cdot|x,a)} [ \eta_{x',\pi(x')}]
\end{align*}
for a real-valued probability measure ${\eta_{x,a}: \mathcal{X} \times \mathcal{A} \rightarrow \mathcal{P}_{\mathbb{R}}}$ conditioned on $(x,a) \in \mathcal{X} \times \mathcal{A}$. Define the maximal form of the $p$-Wasserstein distance ${\bar{\mathcal{W}}_p(P,\bar{P}) \triangleq \sup_{x,a} \mathcal{W}_p(P_{x,a},\bar{P}_{x,a})}$ where $P_{x,a}$ and $\bar{P}_{x,a}$ are two probability measures on the same space as $\eta_{x,a}$. We can re-express the contraction property of the distributional Bellman operator \citep{bellemare2017distributional} in terms of push-forwards of Markov process measures.

\begin{lemma}
\label{lem:push-forward-contraction}
    Let $P_\pi,\bar{P}_\pi,P_\pi^*$ be Markov kernels induced by a fixed policy $\pi$ and $\mathbf F$ be a push-forward process functional, then the distributional Bellman operator ${\bf T}_{P_*}^\pi$ is a contraction with respect to $\bar{\mathcal{W}}_p$
    \begin{align*}
        \bar{\mathcal{W}}_p\!\left(\! {\bf T}_{P_*}^\pi {\bf F}_\# \mathbb P^{P_\pi}\!,\!  {\bf T}_{P_*}^\pi {\bf F}_\# \mathbb P^{\bar{P}_\pi}\! \right)\! \leq\! \gamma 
        \bar{\mathcal{W}}_p\!\left(\!{\bf F}_\# \mathbb P^{P_\pi}, {\bf F}_\# \mathbb P^{\bar{P}_\pi}\! \right)\!.
    \end{align*}
\end{lemma}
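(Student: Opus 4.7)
The plan is to mirror the classical distributional Bellman contraction argument \citep{bellemare2017distributional}, but applied to families of probability measures that happen to arise as push-forwards of Markov process measures. Crucially, the contraction property is a property of the operator $\mathbf{T}_{P_*}^\pi$ itself; it does not depend on which $(x,a)$-indexed family of real-valued measures we feed in. So I would reduce the statement to a pointwise $\mathcal{W}_p$ bound and then take suprema to upgrade to $\bar{\mathcal{W}}_p$.

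Concretely, fix $(x,a)$ and abbreviate $\eta_{x,a} \triangleq (\mathbf{F}_\# \mathbb{P}^{P_\pi})_{x,a}$ and $\bar\eta_{x,a} \triangleq (\mathbf{F}_\# \mathbb{P}^{\bar{P}_\pi})_{x,a}$. By the definition of $\mathbf{T}_{P_*}^\pi$, the measure $(\mathbf{T}_{P_*}^\pi \eta)_{x,a}$ is the law of $R(x,a) + \gamma Z$, where $Z$ is drawn from the mixture $\int P_*(dx'|x,a)\,\eta_{x',\pi(x')}$, and analogously for $\bar\eta$. I would then chain three standard properties of $\mathcal{W}_p$: \textbf{(i)} translation invariance, which kills the common shift $R(x,a)$; \textbf{(ii)} positive homogeneity, $\mathcal{W}_p((s_\gamma)_\# \mu,(s_\gamma)_\# \nu) = \gamma\,\mathcal{W}_p(\mu,\nu)$, which produces the factor $\gamma$; and \textbf{(iii)} the mixture convexity inequality $\mathcal{W}_p\bigl(\int \lambda(dx')\,\mu_{x'},\,\int \lambda(dx')\,\nu_{x'}\bigr) \le \bigl(\int \lambda(dx')\,\mathcal{W}_p^p(\mu_{x'},\nu_{x'})\bigr)^{1/p}$ applied with $\lambda=P_*(\cdot\mid x,a)$ and $(\mu_{x'},\nu_{x'})=(\eta_{x',\pi(x')},\bar\eta_{x',\pi(x')})$. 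Upper-bounding each pointwise distance by $\bar{\mathcal W}_p(\eta,\bar\eta)$ yields $\mathcal{W}_p\bigl((\mathbf T_{P_*}^\pi \eta)_{x,a},(\mathbf T_{P_*}^\pi \bar\eta)_{x,a}\bigr) \le \gamma\,\bar{\mathcal W}_p(\eta,\bar\eta)$. Taking $\sup_{x,a}$ of the left-hand side delivers the claim.

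The only delicate step is \textbf{(iii)}, the mixture convexity of $\mathcal{W}_p$ for $p>1$. The clean route is to choose, for each $x'$, an optimal coupling $\pi^*_{x'}$ of $(\eta_{x',\pi(x')},\bar\eta_{x',\pi(x')})$ in a measurable way (via a measurable-selection argument on the weakly continuous family of couplings) and then integrate against $P_*(\cdot\mid x,a)$ to build a valid coupling of the two mixtures; Jensen's inequality on the transport cost then produces the $L^p$-mean bound. Boundedness of the reward by $R_{\max}$ together with geometric discounting guarantees that the return functional and hence all relevant push-forwards have finite $p$-th moments, so $\mathcal{W}_p$ is well-defined throughout, and no additional integrability assumption is needed beyond what the problem statement already provides.
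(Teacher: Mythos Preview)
Your proposal is correct and follows essentially the same route as the paper: fix $(x,a)$, invoke translation invariance (regularity), positive homogeneity, and mixture convexity ($p$-convexity) of $\mathcal{W}_p$ to get the pointwise bound, then take the supremum. The paper's proof simply cites these three properties from \citet{bellemare2023distributional} rather than unpacking the coupling construction behind step~(iii), but the argument is identical.
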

The key nuance is that the contraction modulus is due to $\gamma$ within the distributional Bellman operator, not to the discount factor applied by the return functional. From Banach's fixed point theorem \citep{banach1922operations}, it follows that there exists $\widehat{P}_\pi \in \mathcal{P_X}$ that satisfies $\bar{\mathcal{W}}_p  ( {\bf T}_{P_*}^\pi {\bf F}_\# \mathbb P^{\widehat{P}_\pi}, {\bf F}_\# \mathbb P^{\widehat{P}_\pi}  ) = 0$ for a sufficiently large set of admissible Markov kernels $\mathcal{P_X}$. One can find this fixed point at a geometric rate (i.e., the error is $O(\gamma^k)$) by starting from an arbitrary $P_\pi^0 \in \mathcal{P_X}$ and repeatedly applying ${\bf T}_{P_*}^\pi$. This fixed point is unique in the space of Markov process measures after a push-forward with functional ${\bf F}$, as guaranteed by the Banach fixed point theorem applied to the contraction established in Lemma~\ref{lem:push-forward-contraction}: since $\gamma < 1$, ${\bf T}_{P_*}^\pi$ is a strict contraction on the complete metric space $(\mathcal{P}_{\mathbb{R}}^{\mathcal{X} \times \mathcal{A}}, \bar{\mathcal{W}}_p)$, which admits exactly one fixed point. There may be multiple Markov process measures almost surely equal to this fixed point  ${\bf  F}_{\#}  \mathbb P^{{\widehat P}_\pi}_{x_0,a_0}$ after being pushed forward with ${\bf F}$. Evidently, ${\bf  F}_{\#}  \mathbb P^{P_\pi^*}_{x_0,a_0}$ also satisfies this condition. The Markov kernel of the Bellman operator anchors the search process performed via Bellman backups.
 
The advantage of distributional RL over model-based RL appears in situations where capturing the push-forward of the Markov process measure with the return functional is sufficient for decision making, such as in risk-sensitive RL \citep{lim2022distributional, keramati2020being, dabney2018implicit, bernhard2019addressing}. The equivalence of this measure within a set of transition kernels brings sample efficiency when it can be exploited by the learning algorithm. Model-based RL theory formulates such equivalence classes via \emph{state abstractions} \cite{li2006unified,givan2003equivalence, jiang2015abstraction}. We will next develop some essential concepts to establish the link between state abstractions and distributional RL. First, we need to construct an embedding space on which state abstractions can be formulated. 
\begin{definition} 
A mapping $K: \mathcal{S} \rightarrow \mathcal{P}_{\mathcal{X}}$ is said to be an $L$-{\bf Lipschitz Markov kernel} if (i) for every $B \in \sigma(\mathcal{X})$, $s \mapsto K(s, B)$ is measurable and (ii) for all $s_1, s_2 \in \mathcal{S}$: $\mathcal{W}_p(K(s_1, \cdot), K(s_2, \cdot)) \leq L \cdot |s_1 - s_2|$. 
The action of the kernel $K$ on a measure $P \in \mathcal{P}_{\mathcal{S}}$, denoted $KP$, is the measure on $\mathcal{X}$ defined by:
\begin{equation*}
    (K P)(B) \triangleq \int_{\mathcal{S}} K(s, B) \, P(ds), \quad \forall B \in \sigma(\mathcal{X}).
\end{equation*}
\end{definition}

Now consider the push-forward of the transition kernel $P$ of a Markov process $\mathbb{P}_{x,a}^P$ with an $L_E$-Lipschitz continuous function $S$ to an embedding space $\mathcal{S}$, which we denote by $S_\# P$, and then mapping back to the state space by an $L_D$-Lipschitz Markov kernel $P_D$. This composite operation can be viewed as the action of $P_D$ on the measure $S_{\#} P$. We can also construct the same outcome by transforming the input of $P_D$, which then defines an operator on $P$.

\begin{definition} Let $F: \mathcal{X} \to \mathcal{S}$ be a measurable transformation, and $K: \mathcal{S} \to \mathcal{P}_{\mathcal{X}}$ a $L$-Lipschitz Markov kernel. We define the {\bf composite kernel operator} $(K F): \mathcal{X} \to \mathcal{P}_{\mathcal{X}}$ pointwise as $K(x, \cdot) = K(F(x), \cdot)$. For any measure $P \in \mathcal{P}_{\mathcal{X}}$, the action of the operator $K F$ is given by:
\begin{equation*}
    (KF~P)(B) \triangleq \int_{\mathcal{X}} K(F(x), B) \, P(dx), \quad \forall B \in \sigma(\mathcal{X}).
\end{equation*}
\end{definition}
The observation is that $(KF~P)$ acts identically to $K(F_{\#}P)$. The following result establishes how an input transformation affects the Lipschitz continuity of a Markov kernel.

\begin{lemma}
Let $F: \mathcal{X} \to \mathcal{S}$ be an $M$-Lipschitz transformation and $K: \mathcal{S} \to \mathcal{P}_{\mathcal{X}}$ be an $L$-Lipschitz Markov kernel. Then the composite kernel operator $(KF): \mathcal{X} \to \mathcal{P}_{\mathcal{X}}$ is an $(L \cdot M)$-Lipschitz Markov kernel.
\label{lem:composite_lipschitz}
\end{lemma}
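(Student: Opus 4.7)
The statement is really a two-part verification against the definition of an $L$-Lipschitz Markov kernel: (i) measurability of $x \mapsto (KF)(x,B)$ for every $B \in \sigma(\mathcal{X})$, and (ii) the Wasserstein-Lipschitz bound in the state-space variable. The plan is to discharge (i) by a standard composition-of-measurable-maps argument, and then chain the two Lipschitz inequalities in (ii) to obtain the constant $L \cdot M$.

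\textbf{Step 1 (measurability).} Fix $B \in \sigma(\mathcal{X})$. By assumption $F:\mathcal{X} \to \mathcal{S}$ is $M$-Lipschitz, hence continuous, hence Borel measurable. By hypothesis on $K$, the map $s \mapsto K(s,B)$ is measurable. The composition
\begin{equation*}
x \mapsto (KF)(x,B) \triangleq K(F(x),B)
\end{equation*}
is therefore measurable as a composition of measurable maps, so condition (i) of the definition of an $L$-Lipschitz Markov kernel is satisfied.

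\textbf{Step 2 (Lipschitz bound).} Pick arbitrary $x_1, x_2 \in \mathcal{X}$. Since $(KF)(x,\cdot) = K(F(x),\cdot)$ as measures on $\mathcal{X}$ by definition of the composite kernel operator, I apply the $L$-Lipschitz property of $K$ at the points $F(x_1), F(x_2) \in \mathcal{S}$ and then the $M$-Lipschitz property of $F$:
\begin{align*}
    \mathcal{W}_p\bigl((KF)(x_1,\cdot),\, (KF)(x_2,\cdot)\bigr)
    &= \mathcal{W}_p\bigl(K(F(x_1),\cdot),\, K(F(x_2),\cdot)\bigr) \\
    &\leq L \cdot |F(x_1) - F(x_2)| \\
    &\leq L \cdot M \cdot |x_1 - x_2|.
\end{align*}
This establishes condition (ii) with Lipschitz constant $L \cdot M$, completing the verification.

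\textbf{Expected obstacle.} There is essentially no serious obstacle; the argument is a two-line chaining of the defining properties. The only mild subtlety is purely notational: one must be careful that the $|\cdot|$ used in the definition is compatible across the metrics on $\mathcal{X}$ and $\mathcal{S}$ (i.e., that the Lipschitz constants of $F$ and $K$ are defined with respect to the same metric choices the paper uses), but under the definition stated immediately above the lemma this identification is automatic. No regularity beyond what is hypothesized is required, and measurability of $F$ comes for free from Lipschitz continuity.
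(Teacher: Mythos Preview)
Your proposal is correct and follows essentially the same approach as the paper: chain the $L$-Lipschitz bound of $K$ at the image points $F(x_1),F(x_2)$ with the $M$-Lipschitz bound of $F$. The only difference is that you also verify the measurability condition (i), which the paper's proof omits entirely.
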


Our main theoretical result follows from the fact that transforming the transition kernels of two Markov processes with a point-wise fixed $L$-Lipschitz operator contracts their probability measures by a factor of $L$ with respect to the $p$-Wasserstein distance. Consequently, performing distributional RL after an auto-encoding operation with an $L_E$-Lipschitz encoder $S_\# P$ and an $L_D$-Lipschitz decoder $P_D$ affects the contraction modulus by $L_E \cdot L_D$.

\begin{theorem}
\label{thm:ppi-convergence}
Let  $P_\pi,\bar{P}_\pi,P_\pi^* \in \mathcal{P}_{\mathcal{X}}$ and $\mathbf F$ be a push-forward process functional. Let $P_D$ be an $L_D$-Lipschitz Markov kernel and $S: \mathcal{X} \rightarrow \mathcal{S}$ be a $L_E$-Lipschitz continuous measurable map from $(\mathcal{X},\sigma(\mathcal{X}))$ to $(\mathcal{S},\sigma(\mathcal{S}))$. Then the following inequality holds for any $x_0, a_0 \in \mathcal{A}, \mathcal{X}$
    \begin{align*}
        &\bar{\mathcal{W}}_p\left( {\bf T}_{P_*}^\pi {\bf F}_\# \mathbb P_{x_0, a_0}^{(P_D S) P_\pi},  {\bf T}_{P_*}^\pi {\bf F}_\# \mathbb P_{x_0, a_0}^{(P_D S)\bar{P}_\pi} \right)\\
        &\hspace{5em} \leq\gamma \cdot L_E \cdot L_D \cdot
        \bar{\mathcal{W}}_p\left({\bf F}_\# \mathbb P_{x_0, a_0}^{P_\pi}, {\bf F}_\# \mathbb P_{x_0, a_0}^{\bar{P}_\pi} \right)\!.
    \end{align*}
\end{theorem}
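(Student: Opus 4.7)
The plan is to chain two independent contractions whose product matches the claimed modulus $\gamma L_E L_D$: the distributional Bellman contraction of \cref{lem:push-forward-contraction} supplies the $\gamma$ factor, and the Lipschitz regularity of the composite auto-encoding kernel given by \cref{lem:composite_lipschitz} supplies the $L_E L_D$ factor. Nothing else about the return functional $\mathbf F$ is needed beyond what \cref{lem:push-forward-contraction} already assumes, so the argument reduces to plugging together these two previously established facts.

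First, I would invoke \cref{lem:push-forward-contraction} after setting the policy-induced kernels to the composites $(P_DS)P_\pi$ and $(P_DS)\bar{P}_\pi$. These are bona fide Markov kernels on $\mathcal{X}$ (sample a next state from the inner kernel and feed it through the encoder--decoder pipeline), so the lemma applies verbatim and gives
\begin{align*}
&\bar{\mathcal{W}}_p\!\left({\bf T}_{P_*}^\pi {\bf F}_\#\mathbb{P}_{x_0,a_0}^{(P_DS)P_\pi},\, {\bf T}_{P_*}^\pi {\bf F}_\#\mathbb{P}_{x_0,a_0}^{(P_DS)\bar{P}_\pi}\right) \\
&\qquad \leq \gamma \, \bar{\mathcal{W}}_p\!\left({\bf F}_\#\mathbb{P}_{x_0,a_0}^{(P_DS)P_\pi},\, {\bf F}_\#\mathbb{P}_{x_0,a_0}^{(P_DS)\bar{P}_\pi}\right).
\end{align*}
Next, I would strip the composite out of the kernels at the cost of its Lipschitz constant. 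By \cref{lem:composite_lipschitz}, $(P_DS)$ is an $(L_DL_E)$-Lipschitz Markov kernel, and the standard lifting of kernel Lipschitz regularity to measures yields $\mathcal{W}_p((P_DS)\mu,(P_DS)\nu)\le L_DL_E\,\mathcal{W}_p(\mu,\nu)$. I would then extend this pointwise inequality to the trajectory measures via a synchronous coupling: starting from an optimal coupling of $P_\pi$ and $\bar P_\pi$ at each time step, feed both marginals through a \emph{common} realisation of $(P_DS)$, so that the Lipschitz factor is paid exactly once per mismatch between the inner kernels while the shared composite induces no further amplification. Pushing the resulting coupling forward through $\mathbf F$ transfers the bound to the return space via the same mechanism used in \cref{lem:push-forward-contraction}, and taking the supremum over $(x_0,a_0)$ gives
\begin{align*}
&\bar{\mathcal{W}}_p\!\left({\bf F}_\#\mathbb{P}^{(P_DS)P_\pi}_{x_0,a_0},\,{\bf F}_\#\mathbb{P}^{(P_DS)\bar{P}_\pi}_{x_0,a_0}\right)\\
&\qquad \leq L_EL_D\,\bar{\mathcal{W}}_p\!\left({\bf F}_\#\mathbb{P}^{P_\pi}_{x_0,a_0},\,{\bf F}_\#\mathbb{P}^{\bar{P}_\pi}_{x_0,a_0}\right).
\end{align*}
Chaining with the first inequality concludes the proof.

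The hard part will be the path-space lifting in the second step. A naive per-step argument would compound $L_DL_E$ geometrically across the trajectory, wrecking the bound for long horizons. The subtle point is that both trajectory measures share the \emph{same} composite $(P_DS)$, so the Lipschitz amplification should enter only once from the coupling of the mismatched inner kernels. Designing the synchronous coupling precisely — so the shared $(P_DS)$ draws neutralise on both sides and the $L_DL_E$ factor surfaces only from the inner disagreement, with the discount inside $\mathbf F$ absorbing any residual compounding across the horizon — is the delicate portion of the argument and is where I would spend the bulk of the technical work.
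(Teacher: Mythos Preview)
Your proposal is correct and follows essentially the same route as the paper: chain the $\gamma$-contraction of \cref{lem:push-forward-contraction} with the $L_EL_D$-Lipschitz constant of the composite $(P_DS)$ from \cref{lem:composite_lipschitz}, lifted to path measures. The paper applies the two factors in the opposite order and isolates your ``hard part'' --- the path-space lifting showing $\mathcal{W}_p(\mathbb{P}^{(P_DS)P_\pi},\mathbb{P}^{(P_DS)\bar P_\pi})\le L_DL_E\,\mathcal{W}_p(\mathbb{P}^{P_\pi},\mathbb{P}^{\bar P_\pi})$ --- as a standalone auxiliary lemma, proved by exactly the synchronous-coupling idea you sketch (take an optimal coupling of the untransformed path measures and bound the per-step cost after applying the shared $L$-Lipschitz kernel).
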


When a narrow information bottleneck yields efficient compression in the embedding space, $L_E$ should be small as the returns create large equivalence classes in the latent trajectory space (see~\cref{fig:latent_riverswim}), which reduce the contraction modulus of the Bellman backups and speed up convergence. The decoder Lipschitz constant $L_D$ is small when the latent space captures the reward-relevant structure of the environment: nearby latent states map to nearby observations in Wasserstein sense, so $P_D$ need not stretch the metric. The product $L_E \cdot L_D$ formalizes the trade-off between compression and reconstruction fidelity---if the bottleneck is too narrow ($L_E$ very small), the decoder must compensate with a large $L_D$, potentially negating the convergence benefit.

We let distributional RL inherit these benefits of AIF by expressing auto-encoding within that framework. Let us redefine ${\bf G}_\# \mathbb P^{ (P_D S) P_\pi}_{x_0,a_0}$ in terms of an \emph{encoding process} defined on the latent space and a \emph{decoding process}  that maps back to the observation space. The encoding process on $\mathcal{S}^{\mathbb{N}_+}$ pushes $P^{P_\pi}_{x_0,a_0}$ forward through the sequence map $\mathbf{S}(\omega) = (S(x_1), S(x_2), \dots)$, which leads to a new measure $\mathbf{S}_{\#} \mathbb{P}_{x_0,a_0}^{P_\pi}$ on the space $(\mathcal{S}^{\mathbb{N}_+},\sigma(\mathcal{S})^{\otimes {\mathbb{N}_+}})$. For any sequence $\mathbf{s} = (s_1, s_2, \dots) \in \mathcal{S}^{\mathbb{N}_{+}}$, the kernel $P_D$ induces a conditional path measure $\mathbb{P}^{P_D| \mathbf{s}}$ on $\mathcal{X}^{\mathbb{N}_{+}}$ via Ionescu-Tulcea:
\begin{align*}
\mathbb{P}^{P_D|\mathbf{s}}&(C) \triangleq \int_{B_1} P_D(dx_1|s_1) \times \cdots \times \int_{B_n} P_D(dx_n|s_n).
\end{align*}
This represents the stochastic \emph{decoding} of the representation path back into the state space. The return distribution can now be expressed as an integration over the intermediate measure $\mathbf{S}_{\#} \mathbb{P}_{x_0,a_0}^{P_\pi}$ in the space $\mathcal{S}^{\mathbb{N}_+}$. Defining the expected return of a fixed representation path $\mathbf{s}$ as
\begin{align*}
&(\mathbf{G}_{x_0,a_0}^\pi)_{\#}  \mathbb{P}^{P_D|\mathbf{s}} (B) \triangleq \mathbb{P}^{P_D |\mathbf{s}} \Big( \\
&\hspace{1em} \Big\{ \omega \in \mathcal{X}^{\mathbb{N}_+} : R(x_0,a_0)+ \sum_{t=1}^\infty \gamma^t R(x_t,\pi(x_t)) \in B \Big\} \Big) 
\end{align*}
we can express the final return distribution as the integral of these local returns over the representation measure:
\begin{align}
\begin{split}
&(\mathbf{G}_{x_0,a_0}^\pi)_{\#} \mathbb P^{ (P_D S) P_\pi}_{x_0,a_0}(B) = \label{eq:composite_return_measure}\\
&\hspace{4em}\int_{\mathcal{S}^{\mathbb{N}_+}}(\mathbf{G}_{x_0,a_0}^\pi)_{\#}\mathbb{P}^{P_D|{\bf s}} (B)  \, \mathbf{S}_{\#} \mathbb{P}_{x_0,a_0}^{P_\pi}(d\mathbf{s}). 
\end{split}
\end{align}
The fact that the outcome is a composite measure will be instrumental in its implementation.

Let us next demonstrate how this result translates to a distributional RL algorithm. Although our result has more general implications, we will follow the established framework that performs Bellman residual minimization with respect to $\mathcal{W}_2$ for policy evaluation and a policy update on the expectation of the fitted return distribution. Formally, the policy evaluation step minimizes the objective below
\begin{align*}
    J(P_\pi) \triangleq \mathbb{E}_{x \sim \mathcal{U}(\mathcal{X})} \big[\mathcal{W}_2^2( &(\mathbf{G}_{x_0,a_0}^\pi)_{\#}  \mathbb P^{ P_\pi}_{x_0,a_0}, \\
    &{\bf T}_{P_*}^\pi (\mathbf{G}_{x',\pi(x')}^\pi)_{\#}  \mathbb P^{  P_\pi}_{x_0,a_0} )\big].
\end{align*}
Since ${\bf T}_{P_*}^\pi$ has a unique fixed point, we have $J( \widehat{P}_\pi)=0$. When the Markov process is transformed by an auto-encoder $P_D S$
we attain the following upper bound to this objective
\begin{align*}
     &J(S) \triangleq \mathbb{E}_{x \sim \mathcal{U}(\mathcal{X})} \big[\mathcal{W}_2^2( (\mathbf{G}_{x_0,a_0}^\pi)_{\#} \mathbb P^{ (P_D S) P_\pi}_{x_0,a_0}, \\
    &\hspace{10em}{\bf T}_{P_*}^\pi (\mathbf{G}_{x',\pi(x')}^\pi)_{\#}  \mathbb P^{ (P_D S) P_\pi}_{x_0,a_0} )\big]\\
   &\leq \mathbb{E}_{x \sim \mathcal{U}(\mathcal{X})} \Big [\mathbb{E}_{{\bf s} \sim \mathbf{S}_{\#} \mathbb{P}_{x_0,a_0}^{P_\pi}} \Big [ \mathbb{E}_{\tau \in \mathcal{U}(0,1)} \Big [ \Big ( \\
   &\hspace{0.5em}  \Big ( F_{(\mathbf{G}_{x_0,a_0}^\pi)_{\#}  \mathbb{P}^{P_D| \mathbf{s}}}^{-1}(\tau) - F_{{\bf T}_{P_*}^\pi (\mathbf{G}_{x',\pi(x')}^\pi)_{\#}  \mathbb{P}^{P_D| \mathbf{s}}}^{-1}(\tau) \Big )^2 \Big ] \Big ] \Big ].   
\end{align*}

As the final statement is indexed only by $\mathbf{S}_{\#} \mathbb{P}_{x_0,a_0}^{P_\pi}$, we can instead search directly the space of encoding measures 
\begin{align}
\mathcal{Z}_{x,a}^\pi \triangleq \{Z_{x,a} : Z_{x,a}=\mathbf{S}_{\#} \mathbb{P}_{x_0,a_0}^{P_\pi},  P_\pi \in \mathcal{P}_\pi \}   \label{eq:encoding_process_search}
\end{align}
for $(x,a) \in \mathcal{X} \times \mathcal{A}$. Using the expectation of the identified push-forward distributions in the policy improvement step---which is Bayes-optimal under squared loss, as the posterior mean minimizes the expected squared prediction error \citep[see, e.g.,][Ch.~4]{berger1985statistical}---we arrive at \cref{alg:push_forward_policy_iteration}. We call this algorithm template \emph{push-forward policy iteration} and its practical applications \emph{push-forward reinforcement learning}. This algorithm will deliver a policy sequence identical to Algorithm~\ref{alg:mbpi} whenever $(\mathbf{G}_{x_0,a_0}^\pi)_{\#} \mathbb P^{ P_\pi}_{x_0,a_0}$ is realizable under $\mathcal{Z}_{x,a}^\pi$ for all $x, a, \pi$, as this will make their policy improvement steps measurably identical. In practice, this realizability can be enforced with powerful function approximators. The gain in sample complexity depends on the trade-off between the capacity of these function approximators and the level of compression that the underlying transition dynamics permits via state abstractions. 
\begin{algorithm}[t!]
     \caption{Push-Forward Policy Iteration (PPI)}
     \label{alg:push_forward_policy_iteration}
    \begin{algorithmic}
    \STATE {\bfseries Input:} $\pi_0, i:=0$
    \WHILE{True} 
       \FOR{$x \in \mathcal{X}$}
       \STATE $Z_{x,a}^{i+1} :=\arg\min_{Z \in \mathcal{Z}_{x,a}^{\pi_i}} \mathbb{E}_{\tau \sim \mathcal{U}(0,1)} \mathbb{E}_{ {\bf s} \sim Z_{x,a}} \Big [ $\\
       \STATE $\Big ( F_{(\mathbf{G}_{x,a}^\pi)_{\#}  \mathbb{P}^{P_D| \mathbf{s}}}^{-1}(\tau) -F_{{\bf T}_{P_*}^{\pi_i} (\mathbf{G}_{x',\pi_i(x')}^{\pi_i})_{\#}  \mathbb{P}^{P_D| \mathbf{s}}}^{-1}(\tau)  \Big )^2 \Big ] $ 
       \STATE $\pi_{i+1}(x) := $
       \STATE $\quad \argmax_{a\in \mathcal{A}} \mathbb{E}_{{\bf s} \sim Z_{x,a}^{i+1}} \Big [(\mathbf{G}_{x,a}^\pi)_{\#}  \mathbb{P}^{P_D| \mathbf{s}}({\bf s}) \Big ]$
       \STATE $i := i+1$
       \ENDFOR 
    \ENDWHILE
    \end{algorithmic}
\end{algorithm}

\section{Active Inference with Push-Forward RL}
\label{sec:paif}

We next integrate AIF into the push-forward RL framework and remove the need to estimate the transition kernel of the underlying dynamical system. We will exploit the fact that in Eq.\ \ref{eq:full_aif_elbo}, $P_M$ and $P_E$ always appear together in a marginalization process. Let us denote their marginal by 
\begin{align*}
P_{S|x,a}(S') \triangleq \mathbb{E}_{s \sim P_{E|x}} [P_M(S'|s,a)] .
\end{align*}
This distribution can be interpreted as a \emph{transfer transition kernel} from a source Markov chain on $\mathcal{X}$ to a target Markov chain on $\mathcal{S}$ \citep{lazaric2011transfer}. $P_S$ is a push-forward of $P_\pi$ with some function $S$.

Let us define the next state $S'$ as the infinitely-long trajectory of future latent states ${\bf s}$ simulated on the world model after taking action $a$ at state $x$ and following policy $\pi$ afterwards. Let $\omega$ be the corresponding states in the observation space. We can then construct the desired state distribution as the return of this trajectory:
$P_R(\omega) \propto \exp(({\bf G}_{x,a}^\pi)_{\#}(\omega))$.
Placing this construction into the related term of the ELBO developed in \cref{eq:full_aif_elbo} we get
\begin{align*}
&\mathbb{E}_{\omega \sim P_D P_M P_E|x,a}  [  ({\bf G}_{x,a}^\pi)_{\#} (\omega)  ] \\
&\qquad =\mathbb{E}_{\omega \sim \mathbb{P}_{x,a}^{ (P_D P_M P_E)_\pi}}  [  ({\bf G}_{x,a}^\pi)_{\#}(\omega) ]\\
&\qquad =\mathbb{E} [  ({\bf G}_{x,a}^\pi)_{\#} \mathbb{P}_{x,a}^{ (P_D P_M P_E)_\pi}  ]\\
&\qquad = \mathbb{E}_{{\bf s} \sim {\bf S}_{\#} \mathbb{P}_{x,a}^{P_\pi}} \left [  ({\bf G}_{x,a}^\pi)_{\#}\mathbb{P}^{ P_D | {\bf s}} \right ]\\
&\qquad = \mathbb{E}_{\omega \sim P_D P_{S|x,a}} \left [  ({\bf G}_{x',\pi(x')}^\pi)_{\#}(\omega) \right ]
\end{align*}
where we omit the integral over $P_A$ for notational brevity. The first and the fourth equations follow from definitions, the second from the Law of the Unconscious Statistician, and the third from Eq.\ \ref{eq:composite_return_measure}. By learning an encoding process as in \cref{eq:encoding_process_search} and performing distributional RL on the latent trajectory space, we can capture $({\bf G}_{x,a}^\pi)_{\#}\mathbb{P}^{ P_D | {\bf s}}$. Thus, we can inherit the benefits of learning a transition kernel as in \cref{alg:mbpi} to simulate trajectories. Furthermore, we can quantify the effects of downstream calculations on these simulations in an infinitely long horizon.

We implement this recipe by learning a state-action amortized encoder that maps to the latent space and performing quantile regression of returns on this space. To operate on a latent space, we need to cast the quantile regression problem as a maximum likelihood estimation instance and assign input-dependent priors to its parameters. The solution of the problem below for an i.i.d.\ sample of $Y$ gives an unbiased estimate of the $\tau$'th quantile of a conditional distribution $P(Y|X)$ with input $X$ and output  $Y$:
\begin{align}
    \widehat{g}_\tau \triangleq \argmin_{g_\tau} \mathbb{E}_{X,Y \sim P}[\ell_\tau(Y - g_\tau(X))] \label{eq:qr}
\end{align}
for a predictor $g_\tau$ where $\ell_\tau(u) \triangleq (|u| + (2 \tau -1 ) u)/2$ is the check loss. By extending the connection established by \citet{yu2001bayesian}, we can frame $ \widehat{g}_\tau$ as the maximum likelihood estimate (MLE) of the Asymmetric Laplace Distribution (ALD) \citep{koenker1978regression} 
\begin{align*} 
f_\tau(Y|X; g_\tau, \sigma_\tau) = \frac{\tau(1-\tau)}{\sigma_\tau} \exp\left( -\frac{\ell_\tau(Y - g_\tau(X))}{\sigma_\tau} \right).
\end{align*} 
By setting $\mu_\tau \triangleq g_\tau$ and treating both $\mu_\tau$ and $\sigma_\tau$ as random variables that follow a prior distribution with state and action dependent hyperparameters, we can perform distributional reinforcement learning on a latent space. We can capture the randomness caused by the auto-encoding step with a probability measure $E_\phi(x,a,\tau)$, the parameters $\phi$ of which are trainable with a high-capacity function approximator. In our context, $\mu_\tau$ captures the mean quantile for these returns and $\sigma_\tau$ their standard deviation. The regression target $Y$ can be evaluated via Bellman backups. 

The remaining two terms of the AIF ELBO perform a model update and policy entropy maximization. Whether to implement the latter is a design choice. We obtained  better results with action-noise exploration than maximum entropy policy search (see Appendix \ref{subsec:appx_experiment}, Table \ref{tab:results_ablation}). This is because the Inverse-Gamma prior on $\sigma_\tau$ in our uncertainty quantification pipeline acts as  implicit epistemic uncertainty: the posterior variance over the scale parameter captures uncertainty in the quantile estimates, inducing Thompson-sampling-like stochasticity in the value function \citep{osband2013more}. 
Distributional losses additionally promote exploration via uncertainty-aware regularization \citep{sun2025intrinsic}.
Adding SAC-style entropy regularization on top creates redundant exploration noise, as confirmed by our ablation (Table~\ref{tab:results_ablation}). As the model update is a lower bound to the same step of \cref{alg:mbpi}, the same consequences of Lemma \ref{lem:push-forward-contraction} apply.
 \begin{algorithm}[t!]
     \caption{Distributional Active Inference (DAIF)}
     \label{alg:daif}
    \begin{algorithmic}
    \STATE {\bfseries Input:} $P_A$
    \STATE $x = \texttt{env.reset}()$
    \WHILE{True}
       \STATE $\pi \sim P_A(\cdot | x); \hspace{1em} a := \pi(x); \hspace{1em} x' :=$ \texttt{env.step}$(a)$
       \STATE $D := D \cup (x,a,x')$
       \REPEAT
           \STATE $(x,a,x') \sim D, ~~ \tau \sim \mathcal{U}(0,1), ~~ \pi \sim P_A(\cdot | x)$
           \STATE $\phi := \arg \max_{\phi'} \Big \{ \mathbb{E}_{\mu_\tau, \sigma_\tau^2 \sim E_{\phi'}(x,a,\tau)} \big[ \log f_{\tau}\big($
           \STATE $\hspace{2.25em} R(x,a) +\gamma \mathbb{E}_{\mu'_\tau \sim E_{\phi'}(x',\pi(x'),\tau)} [\mu'_\tau] |\mu_\tau,\sigma_\tau^2 \big) \big] \Big \}$
           \STATE $P_A := \arg \max_{P'_A} \Big \{\mathbb{E}_{\pi \sim P'_A(\cdot | x)} [$ 
           \STATE $\hspace{6em} \mathbb{E}_{\mu_\tau \sim E_{\phi'}(x, \pi(x), \tau)} [\mu_\tau]] \Big ]+\mathbb{H}[P'_A] \Big \}$
       \UNTIL end of training epoch
    \ENDWHILE
    \end{algorithmic}
\end{algorithm}

With this we end up with \cref{alg:daif}, which we refer to as \emph{Distributional Active Inference (DAIF)}. This algorithm preserves the benefits of performing AIF whenever they are available in its native model-based version while eliminating the need to learn a transition kernel. A natural choice for $E_\phi$ would be the Normal-Inverse Gamma distribution with parameterized inputs. However, this would not permit an analytical calculation of the expectation of $\log p_\tau$ in the policy evaluation step. Having observed in our preliminary implementations that modeling the uncertainty around $\mu_\tau$ does not improve performance, we instead assumed a normal prior on the mean with fixed variance and modeled an Inverse Gamma prior for $\sigma_\tau$ (see~\cref{appsec:daif_details}). The critic loss, after marginalizing over $\sigma$, takes the closed form:
\begin{equation}
    \begin{split}
    \mathbb{E}_\sigma\!\left[\log f(G | \mu, \sigma, \tau)\right]
    &=
    \log \tau(1-\tau)
    - \log \beta
    + \psi(\alpha)\\
    &\hspace{-5em} - \frac{\alpha}{2\beta}
    \left(
    |G-\mu| + (2\tau - 1)(G-\mu)
    \right)
    \end{split}\label{eq:daif_nll} 
\end{equation}
where $G = R(x,a) + \gamma \mu'_{\tau'}$ is the Bellman target, $\alpha, \beta$ parameterize the Inverse-Gamma prior on $\sigma_\tau$, and $\psi$ is the digamma function. 
The full update procedure combines this objective with TD3-style design choices \citep{fujimoto2018addressing}: twin critics with min-clipping for the Belman target, action-noise exploration, delayed actor updates, and the weak hyperprior regularization of \citet{akgul2025overcoming}.
See \cref{appsec:daif_details} for the full derivation and the complete deep actor-critic update procedure (\cref{algo:daif}).

\section{Related Work}
\label{sec:related_work}

{\bf Active inference and RL.}
AIF casts control as inference, selecting policies that minimize expected free energy (EFE), with epistemic terms inducing exploration \citep{friston2009reinforcement,friston2015active}. Most practical AIF agents are  \emph{model-based}: they learn an world model and use variational inference to maintain beliefs and/or evaluate EFE, often with planning \citep{ueltzhoffer2018deep,catal2019bayesian,tschantz2020scaling,fountas2020deep,paul2021active,mazzaglia2021contrastive,schneider2022active}. Complementary work analyzes links and limitations between AIF and RL/control-as-inference, including when EFE objectives recover RL-like optimality or reduce to standard intrinsic-motivation criteria \citep{millidge2020relationship,millidge2020deep,sajid2021active,dacosta2023reward}. Closest to our work, \citet{malekzadeh2024active} derive Bellman-style EFE recursions for POMDPs and develop actor--critic updates in belief space, but still rely on learned belief representations and a world model. \emph{DAIF} sidesteps world-model learning for EFE evaluation by learning targets directly from sampled transitions.

\paragraph{\bf Distributional RL.}
Distributional RL models the return distribution rather than its expectation \citep{bellemare2017distributional, dabney2018distributional, dabney2018implicit, yang2019fully}. It has been extended to actor--critic for continuous control \citep{barth-maron2018distributional, ma2025dsac} and applied to risk-sensitive control \citep{lim2022distributional, bernhard2019addressing}. Our push-forward RL framework generalizes distributional RL via push-forwards of trajectory measures, enabling integration with AIF. \emph{Push-forward} also appears in \citet{bai2025pacer}, but refers to transport-map parameterizations rather than our push-forwards within Bellman theory.

\paragraph{\bf Exploration in distributional RL.}
Several works design exploration strategies within distributional RL \citep{tang2018exploration, mavrin2019distributional, zhou2021nondecreasing, oh2022risk, cho2023pitfall}, while distributional losses have been shown to induce intrinsic exploration \citep{sun2025intrinsic}. In contrast, the exploration in DAIF arises naturally from variational free-energy minimization through the posterior uncertainty over the scale parameter $\sigma_\tau$, without requiring explicit exploration bonuses.

See \cref{sec:extended-related-works} for an extended review.

\begin{table*}[!ht]
\centering
\caption{Ranking comparison on 19 continuous control environments from three benchmark suites. The algorithms are ranked separately for each repetition of each environment. The sign $\pm$ indicates the standard deviation. Area Under the Learning Curve (AULC) measures sample efficiency and Final Return measures how well a control task has been solved. The method with the smallest rank value is marked in bold. As the architecture and hyperparameters of the original DrQ-v2 \citep{yarats2022mastering} is tuned specifically for visual control, we do not evaluate this model in the other two benchmark suites. {\bf \# Envs:} Number of environments. {\bf \# Reps:} Number of experiment replications.}
\label{tab:experiments_summary}
\adjustbox{max width=0.98\textwidth}{
\begin{tabular}{ccccccccccccc}
\toprule
\multirow{2}{*}{Suite} & \multirow{2}{*}{\# Envs} & \multirow{2}{*}{\# Reps} & \multicolumn{5}{c}{Area Under the Learning Curve (AULC)}          & \multicolumn{5}{c}{Final Return}  \\ \cmidrule(l){4-8} \cmidrule(l){9-13} 
                       &                         &                          & DRND & DRQv2 & DSAC & DTD3 & DAIF & DRND & DRQv2 & DSAC & DTD3 & DAIF \\ \midrule
EvoGym &
  7 &
  10 &
  $3.2 \pm \sd{1.1}$ &
  --- &
  $2.9 \pm \sd{0.8}$ &
  $2.0 \pm \sd{0.8}$ &
  $\bf 1.5 \pm \sd{0.7}$ &
  $3.7 \pm \sd{0.7}$ &
  --- &
  $2.8 \pm \sd{0.8}$ &
  $2.0 \pm \sd{0.8}$ &
  $\bf 1.6 \pm \sd{0.8}$ \\
DMC &
  7 &
  10 &
  $3.2 \pm \sd{1.1}$ &
  --- &
  $2.8 \pm \sd{0.9}$ &
  $2.4 \pm \sd{1.0}$ &
  $\bf 1.6 \pm \sd{0.7}$ &
  $3.1 \pm \sd{1.2}$ &
  --- &
  $2.8 \pm \sd{0.8}$ &
  $2.6 \pm \sd{0.9}$ &
  $\bf 1.5 \pm \sd{0.8}$ \\
DMC Vision &
  5 &
  5 &
  $4.2 \pm \sd{1.3}$ &
  $3.5 \pm \sd{1.1}$ &
  $2.5 \pm \sd{1.1}$ &
  $2.9 \pm \sd{1.2}$ &
  $\bf 1.9 \pm \sd{1.2}$ &
  $4.4 \pm \sd{1.0}$ &
  $3.0 \pm \sd{1.2}$ &
  $2.8 \pm \sd{1.1}$ &
  $3.8 \pm \sd{1.2}$ &
  $\bf 2.0 \pm \sd{1.4}$ \\ \bottomrule
\end{tabular}%
}
\end{table*}

\begin{figure}%
    \centering
    \includegraphics[width=0.99\linewidth, trim={0 0.35cm 0 0.25cm}, clip]{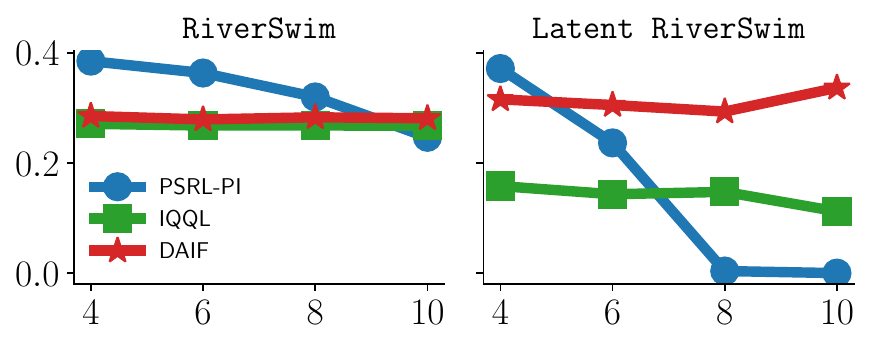}
    \caption{{\bf Horizontal axis:} Distance from the initial state to the most desired state. {\bf Vertical axis:} Frequency of the visitation of the most desired state. DAIF matches the plain distributional RL performance when transition dynamics cannot be represented more efficiently in a latent space (left panel). DAIF outperforms both distributional and model-based counterparts when a latent manifold drives the dynamics at a degree increasing with the difficulty of the problem (right panel). For the learning curves of individual configurations, see~\cref{fig:tabular_results} in the appendix.}
    \label{fig:tabular}
\end{figure}

\begin{figure*}%
    \centering
    \includegraphics[width=0.98\textwidth, trim={0 0.5cm 0 0.25cm}, clip]{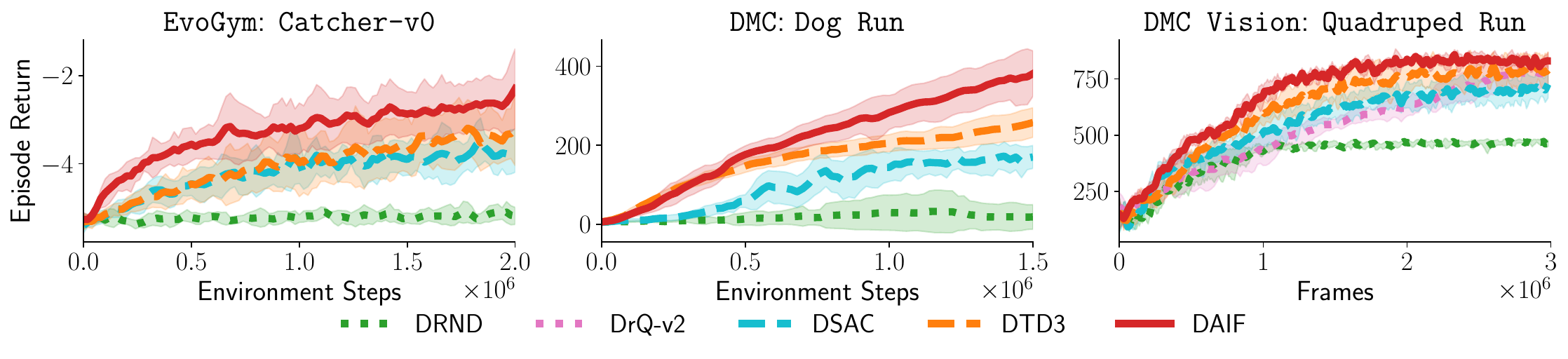}
    \caption{Evaluation curves for three representative environments, one per suite, where DAIF clearly improves the state of the art. These are relatively harder problems of the related suite due to either complex dynamics or large state or action dimensionality, where the abstraction of the return distribution is beneficial. DAIF performs comparably to the state of the art when it does not improve. For the learning curves of the remaining environments, see~\cref{appsec:cont_control}.}
    \label{fig:deeprl}
\end{figure*}

\section{Experiments}

AIF postulates that the action-perception cycle of a biological agent is mediated by simulations on a latent space \citep{parr2022active}. Since the latent space applies state abstractions to maximize simulation efficiency, the advantages of AIF should appear where state abstractions facilitate learning.

\paragraph{\bf Tabular experiments.}
We evaluate on a grid world, \emph{Latent RiverSwim}, whose transition probabilities share RiverSwim's dynamics \citep{osband2013more} on a one-dimensional latent manifold unknown to the agent. The agent observes a 2D state but the reward-relevant dynamics live on a 1D manifold defined by a linear encoder $e_\alpha(i,j) = \alpha i + (1-\alpha) j$. As the horizon grows, the observation space scales quadratically while the latent space grows linearly, making state abstraction beneficial. See \cref{appsec:tabular_envs} and \cref{fig:latent_riverswim} for details. 
We compare DAIF against two tabular baselines: PSRL-PI \citep{osband2013more}, a policy-iteration variant of posterior sampling representative of model-based RL, and IQQL, a tabular implicit quantile network representative of distributional RL \citep{dabney2018implicit}.
\cref{fig:tabular} (right) demonstrates that both model-based RL and distributional RL fail to solve the task as the horizon grows, whereas DAIF maintains performance. As shown in the left panel, the models behave similarly in long horizons when state abstractions are not advantageous.\footnote{We choose the policy iteration variant of PSRL due to its competitive empirical performance \citep[][Figure 1]{tiapkin2022optimistic}, although its regret profile is not yet fully characterized.}

\paragraph{\bf Continuous control experiments.}
We adapt DAIF to continuous control with deep actor-critics (\cref{appsec:daif_details}; the full update procedure is given in \cref{algo:daif}). We evaluate on three benchmark suites: (i) \emph{EvoGym} \citep{bhatia2021evolution}, featuring soft robots across locomotion and manipulation tasks (7 environments); (ii) \emph{DeepMind Control Suite (DMC)} \citep{tassa2018deepmind}, providing continuous control on MuJoCo locomotors with varying morphology and dimensionality (7 environments); and (iii) \emph{DMC Vision}, where the same DMC tasks are solved from raw pixel observations (5 environments). 
Our deep-RL baselines are DRND \citep{yang2024exploration}, an exploration-driven model-free method; DSAC \citep{ma2025dsac}, the state-of-the-art distributional actor-critic; DTD3, a distributional extension of TD3 \citep{fujimoto2018addressing} that we introduce as a direct ablation removing only the AIF component; and, for DMC vision, DrQ-v2 \citep{yarats2022mastering}.
As summarized in \cref{tab:experiments_summary}, DAIF provides consistent performance gains across all three suites. Controlling DMC locomotors is difficult due to compounding function-approximation errors that cause the deadly triad \citep{ciosek2019better}. DMC vision presents the challenge of controlling these platforms from the high-dimensional state space comprising raw pixels. EvoGym introduces control problems for soft robots. 

\cref{fig:deeprl} reports learning curves for one representative environment per suite. \emph{Catcher-v0} is classified as a \emph{hard} task in EvoGym~\citep{bhatia2021evolution} and has the steepest learning curve within the interaction budget. \emph{Dog Run} has the highest state dimensionality among DMC robots, and running demands sustained high-speed control. \emph{Quadruped Run} has the highest action dimensionality among the vision-based environments. DAIF outperforms state-of-the-art baselines by a clear margin throughout the learning process. DAIF's performance boost corroborates our view that AIF is useful for controlling challenging environments with limited computational resources. The performance boost of DAIF requires about $12\%$ more wall-clock computation time than distributional actor-critic methods, which is less than the overhead of DSAC ($26\%$) and DRND ($37\%$) relative to DTD3. See \cref{appsec:cont_control} for details of these experiments.

\section{Takeaways and Open Questions}

We provided a new theoretical framework that enables casting AIF as a simple extension of distributional RL. This implies that the improvements in the adaptation capabilities of an AIF agent should be observed in the distributional setting. We reported a comprehensive set of experiment results that support this claim. The deep RL implementation of DAIF improved the state of the art in multiple soft robotics benchmarks and challenging visual control tasks.

DAIF inherits the convergence guarantees of distributional RL with an improved contraction modulus. A rigorous characterization of its finite-sample characteristics is an open question, as it is for the distributional RL field. Our theoretical framework can also lay a foundation for a formal analysis of AIF's computational properties. A methodology for this could be extending the sample complexity analysis of linear quadratic control \citep{krauth2019finite} to well-behaved non-linearities.

\paragraph{\bf Limitations as a description of active inference.} While DAIF integrates AIF into distributional RL, it does not implement an explicit EFE objective or separate epistemic and instrumental value terms. The epistemic component appears implicitly through the posterior uncertainty of the Inverse-Gamma prior on $\sigma_\tau$, rather than through an explicit information-gain bonus as in canonical AIF. Furthermore, the size of the latent bottleneck---which controls the $L_E \cdot L_D$ trade-off in Theorem~\ref{thm:ppi-convergence}---is set via the architecture (hidden dimensionality) rather than optimized automatically. An interesting direction for future work is to learn the bottleneck size adaptively, potentially via architecture search or information-theoretic regularization.

\section*{Impact Statement}
This work makes both theoretical and algorithmic contributions to the general field of machine learning. The theoretical contributions are two-fold. The first is a formulation of the key concepts of distributional reinforcement learning using only stochastic process constructions and their transformations. The second is a derivation of the active inference machinery using variational and causal inference concepts in a rigorous and consistent manner. This derivation highlights a significant simplification in the resulting objective function. The algorithmic contribution is a distributional counterpart of active inference, which has thus far been practiced in computationally demanding model-based settings. We do not foresee negative societal impacts beyond those generally associated with increased automation.

\bibliography{main}
\bibliographystyle{icml2026}

\newpage
\appendix
\onecolumn
\begin{center}
    \Huge Appendix
\end{center}
\allowdisplaybreaks

\begin{table}[H]
    \caption{Notation used throughout the paper.}
    \label{tbl:notation}
    \begin{tabular}{p{3.1in}p{3.3in}} 
        \toprule
        $\triangleq$ & Definition of a variable. \\
        $:=$ & Assignment of an already defined variable to a new value. \\
        $(g \circ h)(x) \triangleq g(h(x))$ & Composition of functions $g$ and $h$ for some input $x$. \\
        $g^{-1}(B) \triangleq \{x \in \mathcal{X}: g(x) \in B\}$ & Pre-image of set $B$ under function $g$. \\
        $\lVert g(x) \rVert_\infty \triangleq \sup_{x \in \mathcal{X}} |g(x)|$ & Supremum norm. \\
        $\mathbf{1}(\rho)$ & Indicator function; returns 1 if predicate $\rho$ is true, 0 otherwise. \\
        $\mathrm{sign}(x)$ & Sign function that returns $+1$ if $x \geq 0$ and $-1$ otherwise. \\
        $\mathbb{N}$ & The set of natural numbers $\{0, 1, 2, \ldots\}$. \\
        $\mathbb{N}_{+} $ & The set of strictly positive natural numbers $\{1, 2, 3, \ldots\}$. \\
        $\mathbb{R}$ & The set of real numbers. \\
        $\mathbb{R}_{+}$ & The set of positive real numbers. \\  
        $\sigma(\mathcal{X})$ & The $\sigma$-algebra generated by $\mathcal{X}$. \\
        $\mathcal{B}(\mathbb{R})$ & Borel $\sigma$-algebra on $\mathbb{R}$. \\
        $(\mathcal{X}, \sigma(\mathcal{X}))$ & Measurable space with $\sigma$-algebra $\sigma(\mathcal{X})$. \\
        $(\mathcal{X}, \sigma(\mathcal{X}), P)$ & Probability space constructed by measuring  $(\mathcal{X}, \sigma(\mathcal{X}))$ with $P$ . \\
        $\mathcal{P}_{\mathcal{X}}$ & The set of all probability measures on $(\mathcal{X}, \sigma(\mathcal{X}))$. \\
        $\delta_x(A) \triangleq \mathbf{1}(x \in A)$ &  Dirac measure evaluated at $x$ for some measurable set $A$. \\
        $F_P(x) \triangleq P(X \leq x)$ & Cumulative Distribution Function (CDF) of measure $P$. \\
        $f_P(x) \triangleq dF_P(x) / dx$ & Probability Density Function (PDF) of measure $P$. \\ 
        $\mathbb{E}_{x \sim P}[g(x)] \triangleq \int_{\mathcal{X}} g(x) P(dx)$ & Expected value of measurable function $g$ with respect to probability measure $P$. \\
        $\mathbb{V}_{x \sim P}[g(x)] \triangleq \int_{\mathcal{X}} g^2(x) P(dx) - \left(\int_{\mathcal{X}} g(x) P(dx)\right)^2$ & Variance of measurable function $g$ with respect to probability measure $P$.\\
        ${P_{X|Y} P_{Y|z} \triangleq \int P_{X|Y}(\cdot|Y) P_{Y|Z}(dY|z) = P_{X|z}}$ & Marginalization of an intermediate random variable $Y$ conditioned on a point observation $z$. \\
        ${P_X P_Y | z \triangleq \int P_X(\cdot|Y) P_{Y|Z}(dY|z) = P_{X|z}}$ & Shorthand where the conditioning on $Y$. \\
        ${P_X P_Y \triangleq \int P_X(\cdot|Y) P_Y(dY)}$ & Shorthand for unconditional marginalization. \\
        $\mathcal{U}(a,b)$ & Continuous uniform distribution defined on range $a < b \in \mathbb{R}$. \\
        $\mathcal{U}(A)$ & Discrete distribution where each element of set $A$ gets equal probability, i.e., $P(X = a) = 1/|A|, \forall a \in A$. \\
        $\mathcal{IG}(\sigma | \alpha, \beta) = \frac{\beta^\alpha}{\Gamma(\alpha)} \sigma^{-\alpha - 1} \exp{(-\beta/\sigma)}$ & Inverse Gamma distribution with shape $\alpha$ and scale $\beta$. \\
        $\text{Dirichlet}(\boldsymbol{\alpha})$ & Dirichlet distribution with concentration parameters ${\boldsymbol{\alpha} = (\alpha_1, \ldots, \alpha_K)}$, $\alpha_i > 0$. \\
        $\Gamma(x)$ & Gamma function, the continuous extension of the factorial with $\Gamma(n) = (n-1)!$ for positive integers.\\
        $\psi(x)$ & Digamma function, the logarithmic derivative of the gamma function: $\psi(x) = \frac{d}{dx} \log \Gamma(x)$.\\
        $\mathbb{H}[P] \triangleq -\int_{\mathcal{X}} \log p(x) P(dx)$ & (Differential) entropy. \\
        $KL(P \| \bar{P}) \triangleq \int_{\mathcal{X}} \log \left( \frac{dP}{d\bar{P}} \right) P(dx)$ & Kullback-Leibler divergence between probability measures $P$ and $\bar{P}$.\\
        $\Gamma(P, \bar{P})$ & The set of couplings between probability measures $P$ and $\bar{P}$. \\
        $\mathcal{W}_p(P,\bar{P}) \triangleq \left( \inf_{\nu \in \Gamma(P,\bar{P})} \mathbb{E}_{(x,x') \sim \nu}[|x-x'|^p] \right)^{1/p}$ & $p$-Wasserstein distance for absolute norm. \\
        $\bar{\mathcal{W}}_p(P_{x,a}, \bar{P}_{x,a}) \triangleq \sup_{x,a} \mathcal{W}_p(P_{x,a}, \bar{P}_{x,a})$ & Maximal form of $p$-Wasserstein distance. \\
        $P_{W|\mathrm{do}(X\sim P(\cdot))}$ & do-operator that replaces the distribution of variable $X$ in a structural causal model $W$ by $P$.\\
    \bottomrule     
    \end{tabular}
\end{table}
\clearpage

\section{Proofs and derivations}
\subsection{Proof of \cref{lem:push-forward-contraction}}

For a fixed $(x,a)$ the Wasserstein distance is regular, homogeneous, and $p$-convex \citep[see, e.g.,][Definition 4.22-4.24 for details]{bellemare2023distributional}, i.e., 
\begin{align*}
        \mathcal{W}_p^p\left( {\bf T}_{P_*}^{\pi} {\bf F}_\# \mathbb P^{P_\pi}_{x,a},  {\bf T}_{P_*}^{\pi} {\bf F}_\# \mathbb P^{\bar P_\pi}_{x,a} \right) 
        &\leq \gamma^p {\mathcal{W}_p\left(\Ep{x'\sim P^*_\pi(x'|x,\pi(x))}{{\bf F}_\# \mathbb P^{P_\pi}_{x',\pi(x)}}, \Ep{x'\sim P^*_\pi(x'|x,\pi(x))}{{\bf F}_\# \mathbb P^{\bar P_\pi}_{x',\pi(x)}}\right)} \\
        &\leq \gamma^p\Ep{x'\sim P^*_\pi(x'|x,\pi(x))}{ {\mathcal{W}_p\left({\bf F}_\# \mathbb P^{P_\pi}_{x',\pi(x)}, {\bf F}_\# \mathbb P^{\bar P_\pi}_{x',\pi(x)}\right)}} \\
        &\leq \gamma^p \sup_{x',a'} \mathcal{W}_p^p\left({\bf F}_\# \mathbb P^{P_\pi}_{x',\pi(x)}, {\bf F}_\# \mathbb P^{\bar P_\pi}_{x',\pi(x)}\right)
        = \gamma^p \bar{\mathcal{W}}_p^p\left({\bf F}_\# \mathbb P^{P_\pi}, {\bf F}_\# \mathbb P^{\bar P_\pi}\right),
\end{align*}
where the first inequality follows, via regularity and homogeneity, the second via $p$-convexity, the third via the definition of a supremum and finally the equality via the definition of $\bar{\mathcal{W}}_p$. 
As this holds for every pair $(x,a)$ taking the $p$-th root and subsequently the supremum on the left hand side gives the desired inequality $\square$

\subsection{Proof of Lemma \ref{lem:composite_lipschitz}}
By definition, $(KF)(x, \cdot) = K(F(x), \cdot)$. Using the $L$-Lipschitz property of the kernel $K$ on $\mathcal{S}$, we have:
\begin{equation*}
    \mathcal{W}_p(K(F(x), \cdot), K(F(x'), \cdot)) \leq L \cdot |F(x) - F(x')|.
\end{equation*}
Applying the $M$-Lipschitz property of $F$, it follows that $|F(x) - F(x')| \leq M \cdot |x - x'|$. Substituting this into the inequality above gives:
\begin{equation*}
    \mathcal{W}_p((KF)(x, \cdot), (KF)(x', \cdot)) \leq L \cdot M \cdot |x -x'| \hspace{1em} \square
\end{equation*}

\subsection{Proof of \cref{thm:ppi-convergence}}

We apply a cascade of the following two simple results. The first is that passing a Markov kernel through an $L$-Lipschitz operator is a contraction with respect to $p$-Wasserstein distance.

\begin{lemma}
For every $P,P' \in \mathcal{P}_{\mathcal{X}}$ and $L$-Lipschitz Markov kernel $K$, the following holds: 
\begin{align*}
\mathcal{W}_p(K P, K P') \leq L \cdot \mathcal{W}_p(P, P').
\end{align*}
\label{lem:operator_contraction}
\end{lemma}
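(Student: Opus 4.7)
The plan is to exhibit an explicit coupling between $KP$ and $K P'$ whose $p$-th moment cost is dominated by $L^p$ times the optimal cost of transporting $P$ to $P'$, and then take the infimum. Since $K:\mathcal{S}\to\mathcal{P}_{\mathcal{X}}$ is $L$-Lipschitz, for every pair $(s_1,s_2)\in\mathcal{S}\times\mathcal{S}$ there exists an optimal coupling $\gamma_{s_1,s_2}\in\Gamma(K(s_1,\cdot),K(s_2,\cdot))$ with
\begin{align*}
    \int_{\mathcal{X}\times\mathcal{X}} |x-x'|^p\,\gamma_{s_1,s_2}(dx,dx') \;\leq\; L^p\,|s_1-s_2|^p.
\end{align*}
Let $\nu\in\Gamma(P,P')$ be an optimal $\mathcal{W}_p$-coupling on $\mathcal{S}\times\mathcal{S}$, so that $\int |s_1-s_2|^p\,\nu(ds_1,ds_2)=\mathcal{W}_p^p(P,P')$.

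The construction is a gluing step: define a measure $\tilde\gamma$ on $\mathcal{X}\times\mathcal{X}$ by
\begin{align*}
    \tilde\gamma(A\times A') \;\triangleq\; \int_{\mathcal{S}\times\mathcal{S}} \gamma_{s_1,s_2}(A\times A')\,\nu(ds_1,ds_2),\qquad A,A'\in\sigma(\mathcal{X}).
\end{align*}
Disintegrating by the first argument, the first marginal equals $\int K(s_1,A)\,P(ds_1) = (KP)(A)$, and symmetrically the second marginal equals $(KP')(A')$; hence $\tilde\gamma\in\Gamma(KP,KP')$. By Fubini and the pointwise Lipschitz bound,
\begin{align*}
    \mathcal{W}_p^p(KP,KP') \;\leq\; \int |x-x'|^p\,\tilde\gamma(dx,dx') \;=\; \int\!\!\int |x-x'|^p\,\gamma_{s_1,s_2}(dx,dx')\,\nu(ds_1,ds_2) \;\leq\; L^p\,\mathcal{W}_p^p(P,P').
\end{align*}
Taking $p$-th roots yields the claim.

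\textbf{Main obstacle.} The one non-trivial ingredient is that the family $\{\gamma_{s_1,s_2}\}$ must depend measurably on $(s_1,s_2)$ for the definition of $\tilde\gamma$ to make sense and for Fubini to apply. This is the standard measurable-selection question for optimal couplings on Polish spaces: under mild regularity assumptions on $(\mathcal{S},\mathcal{X})$, the set-valued map $(s_1,s_2)\mapsto \Gamma(K(s_1,\cdot),K(s_2,\cdot))$ admits a measurable selector taking values in optimal couplings, which can be obtained from the Kuratowski--Ryll-Nardzewski selection theorem together with lower semicontinuity of the Wasserstein cost. Invoking this standard result closes the argument.
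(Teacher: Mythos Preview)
Your argument is correct and follows essentially the same route as the paper's own proof: take an optimal coupling $\nu$ of $(P,P')$, then for each pair $(s_1,s_2)$ an optimal coupling $\gamma_{s_1,s_2}$ of $(K(s_1,\cdot),K(s_2,\cdot))$, glue them, and bound the cost using the Lipschitz property of $K$. You are in fact slightly more careful than the paper, which implicitly uses the same mixture coupling but does not flag the measurable-selection requirement on $(s_1,s_2)\mapsto\gamma_{s_1,s_2}$.
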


\begin{proof}
Let $\pi$ be an optimal coupling for $(P,P')$ with respect to $d_p$.  For every pair $(s,s') \sim \pi$ let $\gamma_{s,s'}$ be the optimal coupling of $(K(s,\cdot),K(s',\cdot))$. 
We then have 
\begin{align*}
    \mathcal{W}_p^p(KP,KP') &\leq \Ep{(s,s')\sim \pi}{\mathcal{W}_p^p(K(s,\cdot), K(s',\cdot))}\\
    &\leq \Ep{(s,s')\sim \pi}{L \cdot |s-s'|^p}\\
    &=L \cdot \Ep{(s,s')\sim\pi}{|s-s'|^p} \\ 
    &=L \cdot \mathcal{W}_p^p(P,P').  
\end{align*}
The first inequality follows via the definition of the Wasserstein distance as the infimum over all couplings, where the right hand side is the expected distance for the chosen coupling strategy (pairing via $\pi$, then $\gamma_{s,s'}$). The second inequality follows via the Lipschitz assumption and the final via the choice of $\pi$ as the optimal coupling of $(P,P')$. Taking the $p$-th root gives the desired inequality.
\end{proof}

The second is that passing the Markov kernels of two Markov processes through an $L$-Lipschitz operator contracts their $p$-Wasserstein distance by a factor $L$.

\begin{lemma}
Let $\mathbb{P}_{x,a}^P$ and $\mathbb{P}_{x,a}^{P'}$ be the laws of two Markov processes generated by kernels $P$ and $P'$ respectively. Let $K$ be an $L$-Lipschitz probabilistic operator. Define the transformed processes $\mathbb{P}_{x,a}^{K P}$ and $\mathbb{P}_{x,a}^{K P'}$ as the laws generated by the composed kernels $KP$ and $K P'$. Then we have
\begin{equation*}
    \mathcal{W}_p(\mathbb{P}_{x,a}^{K P}, \mathbb{P}_{x,a}^{K P'}) \leq L \cdot \mathcal{W}_p(\mathbb{P}_{x,a}^P, \mathbb{P}_{x,a}^{P'}).
\end{equation*}
\label{lem:path_space_contraction}
\end{lemma}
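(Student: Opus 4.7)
The plan is to prove the path-space contraction via an inductive coupling whose per-step construction invokes Lemma~\ref{lem:operator_contraction}. Since the law of a Markov process is pinned down by its one-step conditional kernel via Ionescu-Tulcea, both $\mathbb{P}_{x,a}^{KP}$ and $\mathbb{P}_{x,a}^{KP'}$ factor as products of one-step kernels differing only by the replacement of $P$ with $P'$ before the common outer action of $K$. This reduces the path-level comparison to a coordinatewise analysis under a carefully chosen joint law.

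Concretely, I would build a coupling $\Pi$ on the product path space $\mathcal{X}^{\mathbb{N}_+}\times\mathcal{X}^{\mathbb{N}_+}$ inductively: set $(X_0,X_0')=(x,x)$, and given $(X_{n-1},X_{n-1}')$, draw $(X_n,X_n')$ from an optimal $\mathcal{W}_p$-coupling of $KP(\cdot\mid X_{n-1},\pi(X_{n-1}))$ and $KP'(\cdot\mid X_{n-1}',\pi(X_{n-1}'))$. Measurability of the conditional optimal coupling as a function of $(X_{n-1},X_{n-1}')$ is delivered by the Kuratowski--Ryll-Nardzewski selection theorem on Polish spaces, and Ionescu-Tulcea assembles the consistent family of finite-dimensional marginals into a well-defined measure $\Pi$ with the correct marginals. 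Applying Lemma~\ref{lem:operator_contraction} at each step then gives
\begin{align*}
\mathbb{E}_\Pi\!\left[|X_n-X_n'|^p \,\middle|\, X_{n-1},X_{n-1}'\right]\leq L^p\,\mathcal{W}_p^p\!\left(P(\cdot\mid X_{n-1},\pi(X_{n-1})),\, P'(\cdot\mid X_{n-1}',\pi(X_{n-1}'))\right).
\end{align*}
Comparing this coordinate by coordinate with an analogously constructed coupling between $\mathbb{P}_{x,a}^P$ and $\mathbb{P}_{x,a}^{P'}$ allows the per-step $L$ factor to be pulled out uniformly, yielding the claimed bound after optimizing over the underlying coupling of the $P$-processes.

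The main obstacle is making the aggregation across coordinates produce a single overall factor $L$ rather than compounding to $L^n$: the path-space Wasserstein must be handled through a metric that is homogeneous of degree one in the per-coordinate distances, and the $KP$-coupling must be traced in lockstep with the auxiliary coupling of the $P$-processes so that the inequality above propagates without accumulating extra Lipschitz factors. The technical heart of the proof is thus the joint tracking of the two inductive couplings---ensuring that the per-coordinate Lipschitz factor from Lemma~\ref{lem:operator_contraction} factors cleanly out of the entire path integral---together with the standard measurable-selection detail guaranteeing that the per-step optimal couplings combine into a bona fide probability measure on the path product.
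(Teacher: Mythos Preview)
Your inductive scheme differs from the paper's argument in an essential way, and the difference is exactly the ``$L$ versus $L^n$'' obstacle you flag but do not resolve. In your construction the step-$n$ coupling is conditioned on $(X_{n-1},X_{n-1}')$ drawn from the $KP$ and $KP'$ chains, so the bound you display lands you at $P(\cdot\mid X_{n-1},\pi(X_{n-1}))$ and $P'(\cdot\mid X_{n-1}',\pi(X_{n-1}'))$ evaluated at states of the \emph{transformed} chains, not at states visited under $\mathbb{P}^P$ and $\mathbb{P}^{P'}$. To connect that right-hand side back to the path-Wasserstein between $\mathbb{P}^P$ and $\mathbb{P}^{P'}$ you would need a further Lipschitz estimate relating the two conditionings, and iterating that estimate is precisely what compounds to $L^n$. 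The phrase ``traced in lockstep'' names the difficulty without supplying a mechanism that avoids this recursion.

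The paper does not build an inductive coupling of the transformed chains at all. It first fixes an optimal coupling $\nu$ of the \emph{untransformed} path measures $\mathbb{P}^P$ and $\mathbb{P}^{P'}$, so that $\mathbb{E}_\nu\bigl[\sum_t|X_t-X_t'|^p\bigr]=\mathcal{W}_p^p(\mathbb{P}^P,\mathbb{P}^{P'})$, and then manufactures a coupling of $\mathbb{P}^{KP}$ and $\mathbb{P}^{KP'}$ by, at each time $t$, pairing $K(X_t,\cdot)$ with $K(X_t',\cdot)$ via their optimal $\mathcal{W}_p$-coupling. Because the anchor trajectory $(X_t,X_t')$ is fixed once by $\nu$ and never fed back through $K$, the pointwise Lipschitz bound $\mathcal{W}_p^p(K(X_t,\cdot),K(X_t',\cdot))\le L^p\,|X_t-X_t'|^p$ simply sums over $t$ to $L^p\sum_t|X_t-X_t'|^p$, yielding a single global factor $L^p$. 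The structural move you are missing is to anchor the entire argument to a coupling of the \emph{original} processes and apply $K$ coordinatewise on top of it, rather than coupling the transformed processes and trying to peel $K$ off afterwards.
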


\begin{proof}
Let $\nu$ be an optimal coupling of the path measures $(\mathbb{P}_{x,a}^P, \mathbb{P}_{x,a}^{P'})$ such that the total cost satisfies ${\mathbb{E}_{\nu}[\sum_{t=0}^\infty |X_t - X'_t|^p] = \mathcal{W}_p^p(\mathbb{P}_{x,a}^P, \mathbb{P}_{x,a}^{P'})}$. By the definition of the Wasserstein distance on the product space and the $L$-Lipschitz property of $K$, we have for each time step $t$:
\begin{equation*}
    \mathcal{W}_p^p(K(X_t, \cdot), K(X'_t, \cdot)) \leq L^p \cdot |X_t - X'_t|^p.
\end{equation*}
Summing over the horizon and taking the limit $T \to \infty$, we bound the distance between the transformed processes by integrating the point-wise kernel distances over the path coupling $\nu$:
\begin{align*}
    \mathcal{W}_p^p(\mathbb{P}_{x,a}^{KP}, \mathbb{P}_{x,a}^{KP'}) &\leq \lim_{T \to \infty} \mathbb{E}_{\nu} \left[ \sum_{t=0}^T \mathcal{W}_p^p(K(X_t, \cdot), K(X'_t, \cdot)) \right] \\
    &\leq \lim_{T \to \infty} \mathbb{E}_{\nu} \left[ \sum_{t=0}^T L^p \cdot |X_t - X'_t|^p \right] \\
    &= L^p \cdot \mathbb{E}_{\nu} \left[ \sum_{t=0}^\infty |X_t - X'_t|^p \right] \\
    &= L^p \cdot \mathcal{W}_p^p(\mathbb{P}_{x,a}^P, \mathbb{P}_{x,a}^{P'}).
\end{align*}
Taking the $p$-th root yields the desired result.
\end{proof}

\begin{proof}[Proof of \cref{thm:ppi-convergence}.]
First, we observe that the transition kernel $(P_D S): \mathcal{X} \to \mathcal{P}_{\mathcal{X}}$ is a composition of an $L_E$-Lipschitz map $S$ and an $L_D$-Lipschitz kernel $P_D$. Consequently, $(P_D S)$ is an $(L_D L_E)$-Lipschitz Markov kernel in the $p$-Wasserstein sense.

Applying Lemma \ref{lem:path_space_contraction}, the distance between the path-space measures generated by the transformed kernels is bounded by:
\begin{equation*}
    \mathcal{W}_p\left(\mathbb P^{(P_D S) P_\pi}, \mathbb P^{(P_D S) \bar{P}_\pi} \right) \leq (L_D L_E) \cdot \mathcal{W}_p\left(\mathbb P^{P_\pi}, \mathbb P^{\bar{P}_\pi} \right).
\end{equation*}

By the property of the process-level push-forward functional ${\bf F}_\#$, this relationship is preserved. Finally, applying the $\gamma$-contraction property of the operator ${\bf T}_{P_*}^{\pi}$, we obtain:
\begin{align*}
    &\bar{\mathcal{W}}_p\left( {\bf T}_{P_*}^{\pi} {\bf F}_\# \mathbb P^{(P_D S) P_\pi},  {\bf T}_{P_*}^{\pi} {\bf F}_\# \mathbb P^{(P_D S)\bar{P}_\pi} \right) \\
    &\leq \gamma \cdot \bar{\mathcal{W}}_p\left( {\bf F}_\# \mathbb P^{(P_D S) P_\pi}, {\bf F}_\# \mathbb P^{(P_D S)\bar{P}_\pi} \right) \\
    &\leq \gamma \cdot L_E \cdot L_D \cdot \bar{\mathcal{W}}_p\left({\bf F}_\# \mathbb P^{P_\pi}, {\bf F}_\# \mathbb P^{\bar{P}_\pi} \right).
\end{align*}
The result follows.
\end{proof}

\subsection{Factorization of the intervened world model}
\label{sec:factorization_appx}
Define
\begin{align}
    \widetilde{P}(X,Y,S) \triangleq P_D(X|Y,S)P_Q(Y,S). \label{eq:joint_tilde}
\end{align}
Then by product rule
\begin{align*}
 \widetilde{P}(X,Y,S)_{\mathrm{do}(X \sim P_R)} &= \widetilde{P}(X|Y,S)_{\mathrm{do}(X \sim P_R)}\widetilde{P}(Y,S)_{\mathrm{do}(X \sim P_R)}\\
  &=\widetilde{P}(X|Y,S)_{\mathrm{do}(X \sim P_R)}\widetilde{P}(Y,S)\\
  &=P_R(X)\widetilde{P}(Y,S)
\end{align*}
By definition we have
\begin{align*}
    \widetilde{P}(X,Y,S)_{\mathrm{do}(X \sim P_R)} &= (P_D)_{\mathrm{do}(X \sim P_R)}(X|Y,S) (P_Q)_{\mathrm{do}(X \sim P_R)}(Y,S)\\
    &=P_R(X) P_Q(Y,S).
\end{align*}
which yields $\widetilde{P}(Y,S)=P_Q(Y,S)$ hence 
\begin{align*}
\widetilde{P}(X,Y,S)_{\mathrm{do}(X \sim P_R)}=P_R(X) P_Q(Y,S).    
\end{align*}
Therefore 
\begin{align*}
 \widetilde{P}(Y,S|X)_{\mathrm{do}(X \sim P_R)}=P_Q(Y,S).
\end{align*}

\subsection{Derivation of the active inference objective}
\label{sec:aif}

Consider that
\begin{align*}
    P_{\widetilde{W}|\mathrm{do}(Y \sim \delta_y, S \sim P_Q)}(X,Y,S)& = P_{\widetilde{W}|\mathrm{do}(Y \sim \delta_y)}(X,Y,S)\\
    &\quad = (P_Q(Y,S) P_R(X))_{\mathrm{do}(Y \sim \delta_y)}\\
    &\quad = (P_Q(Y,S))_{\mathrm{do}(Y \sim \delta_y)} P_R(X)\\    
    &\quad = P_Q(y, S) P_R(X).
\end{align*}
The first equality follows from the fact that $P_{\widetilde{W}}$ has already been constructed with the intervention $\mathrm{do}(S \sim P_Q)$ and the second is due to the result shown in \cref{sec:factorization_appx}. Now the related AIF posterior can be constructed as follows
\begin{align*}
    \log f_{P_0}(Y:=y) &+ \mathrm{const} = \mathbb{E}_{x',s \sim P_{W|\mathrm{do}(Y \sim \delta_y, S \sim P_Q)}} \Big [ L(x',P_{\widetilde{W}|\mathrm{do}(Y \sim \delta_y)}, P_Q \Big )\Big ]\\
    &=\mathbb{E}_{x',s \sim P_{W|\mathrm{do}(Y \sim \delta_y, S \sim P_Q)}} \Big [ \mathbb{E}_{y',s' \sim P_Q}\Big [\log \Big ( P_Q(y,s') P_R(x') / P_Q(y',s') \Big ) \Big ] \Big ]\\
    &=\mathbb{E}_{x',s \sim P_{W|\mathrm{do}(Y \sim \delta_y, S \sim P_Q)}} \Big [ \log P_R(x') + \mathbb{E}_{y',s' \sim P_Q}\Big [\log \Big ( P_Q(y,s') / P_Q(y',s') \Big ) \Big ] \Big ]\\    
    &= \mathbb{E}_{x' \sim P_D P_{Q|y}} \Big [  \log P_R(x') \Big ] + \mathbb{E}_{x' \sim P_D P_{Q|y}} \Big [ \mathbb{E}_{y',s' \sim P_Q} [ \log  P_Q(y,s') ] - \mathbb{E}_{y',s' \sim P_Q} [ \log  P_Q(y',s') ] \Big ]
\end{align*}
where $\mathrm{const}$ represents the normalization constant that does not depend on $Y$. Now let us take the expectation with respect to the posterior on $Y$:
\begin{align*}
    &\mathbb{E}_{y \sim P_Q}[  \log f_{P_0}(Y:=y) ] + \mathrm{const}\\
    &\quad=\mathbb{E}_{y \sim P_Q} \Bigg [\mathbb{E}_{x' \sim P_D P_{Q|y}} \Big [  \log P_R(x') \Big ] + \mathbb{E}_{x' \sim P_D P_{Q|y}} \Big [ \mathbb{E}_{y',s' \sim P_Q} [ \log  P_Q(y,s') ] - \mathbb{E}_{y',s' \sim P_Q} [ \log  P_Q(s',y') ] \Big ]  \Bigg ]\\
    &\quad=\mathbb{E}_{x' \sim P_D P_{Q}} [  \log P_R(x') ] +  \mathbb{E}_{y \sim P_Q} \Bigg [\mathbb{E}_{x' \sim P_D P_{Q|y}} \Big [ \mathbb{E}_{y',s' \sim P_Q} [ \log  P_Q(y,s') ] - \mathbb{E}_{y',s' \sim P_Q} [ \log  P_Q(y',s') ] \Big ]  \Bigg ]\\ 
    &\quad=\mathbb{E}_{x' \sim P_D P_{Q}} [  \log P_R(x') ] +  \mathbb{E}_{y \sim P_Q} \Bigg [\mathbb{E}_{x' \sim P_D P_{Q|y}} \Big [ \mathbb{E}_{y',s' \sim P_Q} [ \log  P_Q(y,s') ] \Big ] \Bigg ]- \mathbb{E}_{y \sim P_Q} \Bigg [\mathbb{E}_{y',s' \sim P_Q} [ \log  P_Q(y',s') ] \Big ]  \Bigg ]\\   
    &\quad=\mathbb{E}_{x' \sim P_D P_{Q}} [  \log P_R(x') ] +  \mathbb{E}_{y \sim P_Q} \Bigg [\mathbb{E}_{x' \sim P_D P_{Q|y}} \Big [ \mathbb{E}_{y',s' \sim P_Q} [ \log  P_Q(y,s') ] \Big ] \Bigg ] - \mathbb{E}_{y',s' \sim P_Q} [ \log  P_Q(y',s') ] \Big ]\\    
    &\quad=\mathbb{E}_{x' \sim P_D P_{Q}} [  \log P_R(x') ] +  \mathbb{E}_{x',y \sim P_D P_Q} \Big [ \mathbb{E}_{y',s' \sim P_Q} [ \log  P_Q(y,s') ] \Big ] - \mathbb{E}_{y',s' \sim P_Q} [ \log  P_Q(y',s') ] \Big ]\\      
    &\quad=\mathbb{E}_{x' \sim P_D P_{Q}} [  \log P_R(x') ] +  \mathbb{E}_{x',y' \sim P_D P_Q} \Big [ \mathbb{E}_{y,s' \sim P_Q} [ \log  P_Q(y,s') ] \Big ] - \mathbb{E}_{y',s' \sim P_Q} [ \log  P_Q(y',s') ] \Big ]\\
    &\quad=\mathbb{E}_{x' \sim P_D P_{Q}} [  \log P_R(x') ] +   \mathbb{E}_{y,s' \sim P_Q} [ \log  P_Q(y,s') ] - \mathbb{E}_{y',s' \sim P_Q} [ \log  P_Q(y',s') ] \Big ]\\
    &\quad=\mathbb{E}_{x' \sim P_D P_{Q}} [  \log P_R(x') ].
\end{align*}
Plugging this result into the ELBO, we get
\begin{align*}
    L(x,P_W,P_Q) &= \mathbb{E}_{y,s \sim P_Q}\Big [\log \Big ( P_D(x|y,s) P_0(y,s) / P_Q(y,s) \Big ) \Big ]\\
    &=\mathbb{E}_{y,s \sim P_Q}[\log  P_D(x|y,s) ] + \mathbb{E}_{y,s \sim P_Q}[\log P_0(y,s)] + \mathbb{H}[P_Q]\\    
    &=\mathbb{E}_{y,s \sim P_Q}[\log  P_D(x|y,s) ]+ \mathbb{E}_{y,s \sim P_Q}[\log P_0(s|y)]  + \mathbb{E}_{y \sim P_Q}[\log P_{0}(y)] + \mathbb{H}[P_Q]\\        
    &=\mathbb{E}_{y,s \sim P_Q}[\log  P_D(x|y,s) ]  + \mathbb{E}_{y,s \sim P_Q}[\log P_0(s|y)]  + \mathbb{E}_{x' \sim P_D P_Q}  [  \log P_R(x') ] + \mathbb{H}[P_Q].
\end{align*}

\subsection{Derivation of the upper bound to the push-forward RL objective}

\begin{align*}
   J(S) = \mathbb{E}_{x \sim \mathcal{U}(\mathcal{X})} &[d_2^2( ({\bf G}_{x_0,a_0}^\pi)_{\#} \mathbb P^{ (P_D S) P_\pi}_{x_0,a_0}, {\bf T}_{P_*}^{\pi} ({\bf G}_{x',\pi(x')}^\pi)_\# \mathbb P^{ (P_D S) P_\pi}_{x_0,a_0} )] \\
   &\leq \mathbb{E}_{x \sim \mathcal{U}(\mathcal{X})} [\mathbb{E}_{{\bf s} \sim \mathbf{S}_{\#} \mathbb{P}_{x_0,a_0}^{P_\pi}}[  d_2^2(({\bf G}_{x_0,a_0}^\pi)_{\#} \mathbb{P}_{x_0,a_0}^{P_D|\mathbf{s}} (B), {\bf T}_{P_*}^{\pi} {\bf G}_{x',\pi(x')}^\pi)_\# \mathbb{P}_{x_0,a_0}^{P_D| \mathbf{s}} (B)) ]]\\
   &=\mathbb{E}_{x \sim \mathcal{U}(\mathcal{X})} \Big [\mathbb{E}_{{\bf s} \sim \mathbf{S}_{\#} \mathbb{P}_{x_0,a_0}^{P_\pi}} \Big [ \int_0^1 \Big ( F_{({\bf G}_{x_0,a_0}^\pi)_{\#} \mathbb{P}_{x_0,a_0}^{P_D| \mathbf{s}}}^{-1}(\tau) - F_{{\bf T}_{P_*}^{\pi} {\bf G}_{x',\pi(x')}^\pi)_\# \mathbb{P}_{x_0,a_0}^{P_D| \mathbf{s}}}^{-1}(\tau)  \Big )^2 d\tau \Big ] \Big ]\\
   &=\mathbb{E}_{x \sim \mathcal{U}(\mathcal{X})} \Big [\mathbb{E}_{{\bf s} \sim \mathbf{S}_{\#} \mathbb{P}_{x_0,a_0}^{P_\pi}} \Big [ \mathbb{E}_{\tau \in \mathcal{U}(0,1)} \Big [ \Big ( \Big ( F_{({\bf G}_{x_0,a_0}^\pi)_{\#} \mathbb{P}_{x_0,a_0}{P_D| \mathbf{s}}}^{-1}(\tau) - F_{{\bf T}_{P_*}^{\pi} {\bf G}_{x',\pi(x')}^\pi)_\# \mathbb{P}^{P_D| \mathbf{s}}}^{-1}(\tau) \Big )^2 \Big ] \Big ] \Big ]
   \end{align*}
where the inequality follows from the composite nature of $({\bf G}_{x_0,a_0}^\pi)_{\#} \mathbb P^{ (P_D S) P_\pi}_{x_0,a_0}$ established in Eq. \ref{eq:composite_return_measure}. The result follows straightforwardly from the role of couplings in the $p$-Wasserstein distance formulation and it is commonly used by prior work in other contexts such as Wasserstein Auto Encoders \citep{tolstikhin2018wasserstein}. 

\subsection{Construction of Markov process measures}\label{app:markov-process-measure}

Define a cylinder set of length $n$ for measurable sets $B_1, B_2, \ldots, B_n \in \sigma(\mathcal{X})$ as
\[
C = \{(x_1, x_2, \ldots) \in \mathcal{X}^{\mathbb{N}_+} : x_1 \in B_1,\, x_2 \in B_2,\, \ldots,\, x_n \in B_n\}.
\]
According to the Ionescu-Tulcea theorem~\citep{ionescu1949mesures}, there exists a unique probability measure $\mathbb{P}^{P^\pi}_{x_0, a_0}$ defined on the product $\sigma$-algebra $\sigma(\mathcal{X})^{\otimes \mathbb{N}_+}$ such that for every $n \in \mathbb{N}_+$ and every cylinder set $C$ of the form $B_1 \times B_2 \times \cdots \times B_n \times \mathcal{X} \times \cdots$, with $B_t \in \sigma(\mathcal{X})$ for $t \geq 1$, the measure is consistent with
\begin{align*}
\mathbb{P}^{P^\pi}_{x_0, a_0}(C) &= \int_{B_1} P(\mathrm{d}x_1|x_0, a_0) \times \int_{B_2} P^\pi(\mathrm{d}x_2|x_1) 
\times \int_{B_3} P^\pi(\mathrm{d}x_3|x_2) \times \cdots \times \int_{B_n} P^\pi(\mathrm{d}x_n|x_{n-1}).
\end{align*}
This measure defines the canonical process $(X_n)_{n \geq 1}$ on the path space $\mathcal{X}^{\mathbb{N}_+}$, where $X_n(\omega) \triangleq \omega_n$ for all $n \geq 1$ and $\omega_n$ denotes the $n$'th coordinate of the trajectory. The resulting triple $(\mathcal{X}^{\mathbb{N}_+}, \sigma(\mathcal{X})^{\otimes \mathbb{N}_+}, \mathbb{P}^{P^\pi}_{x_0, a_0})$ constitutes a valid probability space for the infinite-horizon controllable Markov process that starts from $P(\cdot|x_0, a_0)$ and proceeds with the policy-induced kernel $P^\pi$.

\section{Experiment Details}
\subsection{Tabular environments}\label{appsec:tabular_envs}
\subsubsection{RiverSwim}

The RiverSwim~\citep{osband2013more} environment is defined over a one-dimensional state space $\mathcal{X} = \{1, \ldots, n\}$ with binary actions $\mathcal{A} = \{-1, +1\}$. The transition dynamics are parameterized by probabilities $p_{\texttt{forward}}$ and $p_{\texttt{backward}}$ and are given by
\begin{align*}
    P(X' = k+1 | X = k, A = +1)
    &= p_{\texttt{forward}} \, \mathbf{1}(2 \leq k \leq n - 1)
    + \bigl(1 - (p_{\texttt{forward}} + p_{\texttt{backward}})\bigr)\, \mathbf{1}(k = 1), \\
    P(X' = k | X = k, A = +1)
    &= \bigl(1 - (p_{\texttt{forward}} + p_{\texttt{backward}})\bigr)\, \mathbf{1}(k \geq 2)
    + (p_{\texttt{forward}} + p_{\texttt{backward}})\, \mathbf{1}(k = 1), \\
    P(X' = k-1 | X = k, A = +1)
    &= p_{\texttt{backward}} \, \mathbf{1}(2 \leq k \leq n - 1)
    + (p_{\texttt{forward}} + p_{\texttt{backward}})\, \mathbf{1}(k = n), \\
    P(X' = k+1 | X = k, A = -1)
    &= 0, \\
    P(X' = k | X = k, A = -1)
    &= \mathbf{1}(k = 1), \\
    P(X' = k-1 | X = k, A = -1)
    &= \mathbf{1}(k-1 \geq 1).
\end{align*}

The desired state distribution (reward) is given by
\begin{align*}
    P_R(X = n) = 0.99, \qquad
    P_R(X = 1) = 0.005, \qquad
    P_R(X \notin \{1, n\}) = \frac{0.005}{n - 2}.
\end{align*}

\subsubsection{Latent RiverSwim}

We introduce the Latent RiverSwim environment that extends RiverSwim to a two-dimensional observation space
\[
    \mathcal{X} = \{1, \ldots, n\}^2,
\]
with action space
\[
    \mathcal{A} = \{(+1,0), (-1,0), (0,+1), (0,-1)\},
\]
and a one-dimensional latent state space
\[
    \mathcal{S} = \{1, \ldots, n\},
\]
which defines the latent manifold. The latent encoding and transition dynamics are defined as
\begin{align*}
    e_\alpha(i,j) &= \alpha i + (1 - \alpha) j, \qquad (i,j) \in \mathcal{X} && \text{(state encoder)}, \\
    k &= \lfloor e_\alpha(i,j) \rfloor && \text{(latent state)}, \\
    \bar{a} &= \text{sign}(e_\alpha(a_1, a_2)), \qquad (a_1,a_2) \in \mathcal{A} && \text{(latent action)},
\end{align*}
for $\alpha \in (0,1)$. The latent transition and reward dynamics follow those of RiverSwim with the identification $S \triangleq X$. The decoder maps a latent state $k$ back to the observation space by uniformly sampling one of the states consistent with the encoding:
\begin{align}
    P_D(X | S = k)
    = \mathcal{U}\bigl(\{ X : \lfloor e_\alpha(i,j) \rfloor = k,\ (i,j) \in \mathcal{X} \}\bigr).
    \label{eq:tabular_decoder}
\end{align}

\begin{figure}
    \centering
    \includegraphics[width=0.99\linewidth]{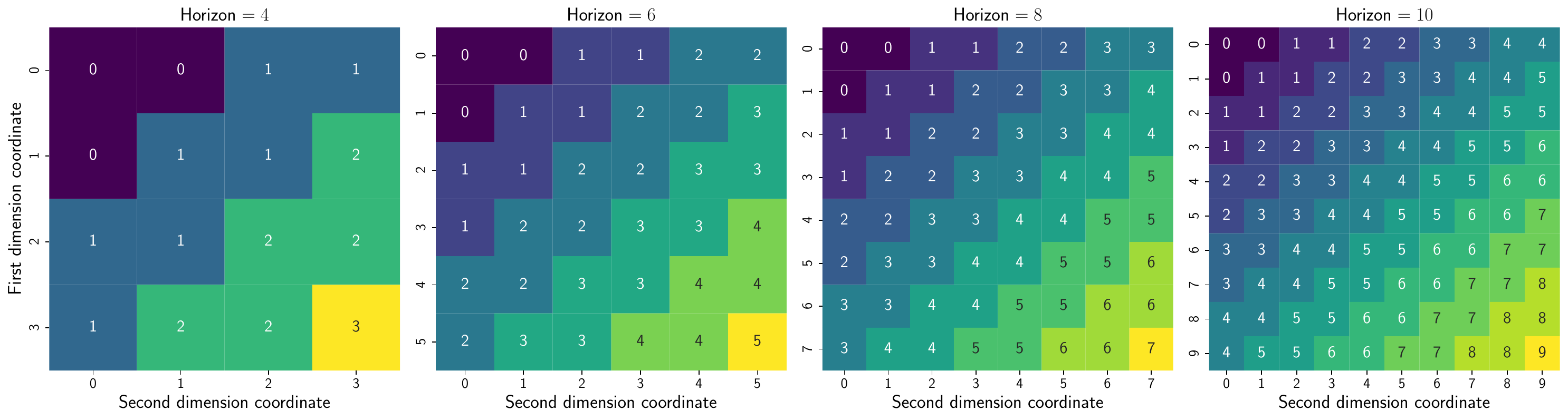}\\
    \caption{Observation-to-latent state mappings in the Latent RiverSwim environment across multiple horizons. The $(i,j)$ coordinates represent indices for each observation in $\mathcal{X}$, while cell values and colours indicate the corresponding latent state index from $\mathcal{S}$. For instance, at Horizon = 4, the cell $(i=3, j=1)$ has value $2$, signifying that observation $(3,1)$ is mapped to latent state $2$ by the encoding process. This visualization demonstrates that when the transitions and rewards originate from a lower-dimensional manifold, the encoder contracts observations into equivalence classes, implying a small Lipschitz constant.}
    \label{fig:latent_riverswim}
\end{figure}

\subsubsection{Tabular algorithms}

We present the state of the art in tabular model-based reinforcement learning using a policy-iteration variant of posterior sampling for reinforcement learning (PSRL-PI), summarized in \cref{alg:tdyna}. As a representative of state-of-the-art distributional reinforcement learning in the tabular setting, we consider \cref{alg:iqql}, which captures the key properties of implicit quantile networks. Our distributional active inference method can be implemented in tabular or discrete-action settings by extending the IQQL algorithm in \cref{alg:iqql}. We refer to the resulting method as \emph{Distributional Active Inference (DAIF)}, which is presented in \cref{alg:aifq}. For details, see \cref{appsec:daif_details}.

\begin{algorithm}[t]
     \caption{Tabular PSRL-PI}
     \label{alg:tdyna}
     \begin{algorithmic}
     \STATE {\bfseries Input:} $\pi(\cdot) := \mathcal{U}(\mathcal{A})$, $\gamma \in (0,1)$, Dirichlet concentration parameter $\alpha_{x,a}: \mathcal{X} \times \mathcal{A} \rightarrow \mathbb{R}_{+}^{|\mathcal{X}|}, \gamma \in (0,1)$
    \WHILE{True}
        \STATE $a = \pi(x)$
        \STATE $x',r :=$ \texttt{env.step}$(a)$
        \STATE $\alpha_{x,a}(x') := \alpha_{x,a}(x') + 1$
        \STATE $\widehat{P}_{\alpha}(\cdot|x, a) \sim \text{Dirichlet}(\alpha_{x,a}(\cdot)) \quad \forall x, a$
        \REPEAT
            \STATE $V_\pi = \left(I - \gamma \widehat{P}_{\alpha}(x, \pi(x), \cdot) \right)^{-1} P_R$
            \STATE $\pi(x) := \argmax_a  P_R(x) +  \gamma \sum_{x'} \widehat{P}_{\alpha}(x'|x, a) V_\pi(x') \quad \forall x \in X$
        \UNTIL{policy $\pi$ is stable}
    \ENDWHILE
    \end{algorithmic}
\end{algorithm}

\begin{algorithm}[t]
     \caption{Tabular Implicit Quantile Q Learning (IQQL)}
     \label{alg:iqql}
    \begin{algorithmic}
    \STATE {\bfseries Input:} $Q_\tau(\cdot,\cdot) := 0, \pi(\cdot) := \mathcal{U}(\mathcal{A}), D:= \emptyset, \gamma \in (0,1)$
    \WHILE{True}
       \STATE $a := \pi(x)$
       \STATE $x',r :=$ \texttt{env.step}$(a)$
       \STATE $D := D \cup (x,a,r,x')$
       \REPEAT
           \STATE $(\widetilde{x},\widetilde{a},\widetilde{r},\widetilde{x}') \sim D, \qquad \qquad \tau, \tau' \sim \mathcal{U}(0,1)$
           \STATE $Q_\tau(\widetilde{x},\widetilde{a}) := \arg \min_{Q'_\tau} \ell_\tau\left( \widetilde{r}+\gamma Q_{\tau'}(\widetilde{x}', \pi(\widetilde{x}')) -Q'_\tau(\widetilde{x},\widetilde{a}) \right)$  \hspace{5em}\COMMENT{$\ell_\tau(u) \triangleq \frac{|u| + (2\tau - 1)u}{2}$}
       \UNTIL end of training epoch
       \STATE $\pi(x) := \arg\max_{a'}  \mathbb{E}_{\tau'' \sim \mathcal{U}(0,1)} [Q_{\tau''}(x,a')] \quad \forall x \in X$
    \ENDWHILE
    \end{algorithmic}
\end{algorithm}
\begin{algorithm}[t]
     \caption{Tabular Distributional Active Inference (DAIF)}
     \label{alg:aifq}
    \begin{algorithmic}
    \STATE {\bfseries Input:} $Q_\tau(\cdot,\cdot) := 0, \pi(\cdot) := \mathcal{U}(\mathcal{A}), D:= \emptyset, \gamma \in (0,1)$
    \WHILE{True}
       \STATE $a := \pi(x)$
       \STATE $x',r :=$ \texttt{env.step}$(a)$
       \STATE $D := D \cup (x,a,r,x')$
       \REPEAT
           \STATE $(\widetilde{x},\widetilde{a},\widetilde{r},\widetilde{x}') \sim D, \qquad \qquad \tau, \tau' \sim \mathcal{U}(0,1)$
           \STATE $\mu, \alpha, \beta := Q_\tau(\widetilde{x},\widetilde{a}), \qquad \mu' := Q_{\tau'}(\widetilde{x}', \pi(\widetilde{x}')), \qquad G := \widetilde{r} + \gamma \mu'$
           \STATE $Q_\tau(\widetilde{x},\widetilde{a}) := \arg \max_{Q_\tau} \Big\{ \log \tau(1-\tau) - \log \beta + \psi(\alpha) - \frac{\alpha}{2\beta} \left( |G-\mu| + (2\tau - 1)(G-\mu) \right) \Big\}$
       \UNTIL end of training epoch
       \STATE $\pi(x) := \arg\max_{a'}  \mathbb{E}_{\tau'' \sim \mathcal{U}(0,1)} [Q_{\tau''}(x,a')] \quad \forall x \in X$
    \ENDWHILE
    \end{algorithmic}
\end{algorithm}

\subsubsection{Experiments}
\label{subsec:appx_experiment}

We evaluate the tabular methods described above on the RiverSwim and Latent RiverSwim environments using $50$ independent repetitions. For RiverSwim, the agents interact with the environment for $\num{5000}$ steps, while for Latent RiverSwim they interact for $\num{10000}$ steps. The first $10\%$ of the interactions are performed using a random policy to initialize exploration. For the IQQL and DAIF models, we use neural networks to approximate the value functions. In RiverSwim, we employ a single linear layer, whereas in Latent RiverSwim we use a multilayer perceptron with ReLU activations and a single hidden layer of width $128$. \cref{fig:tabular_results} reports the results in terms of the frequency of visiting the most desired state within a $100$-step window for varying horizon.

\begin{figure}
    \centering
    \includegraphics[width=0.99\linewidth]{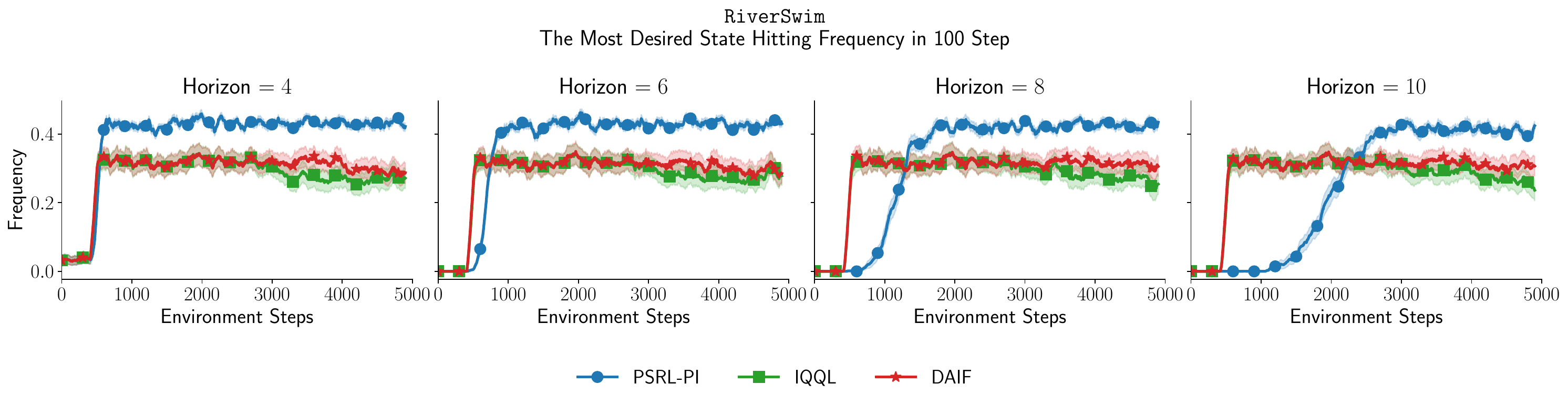}\\
    \includegraphics[width=0.99\linewidth]{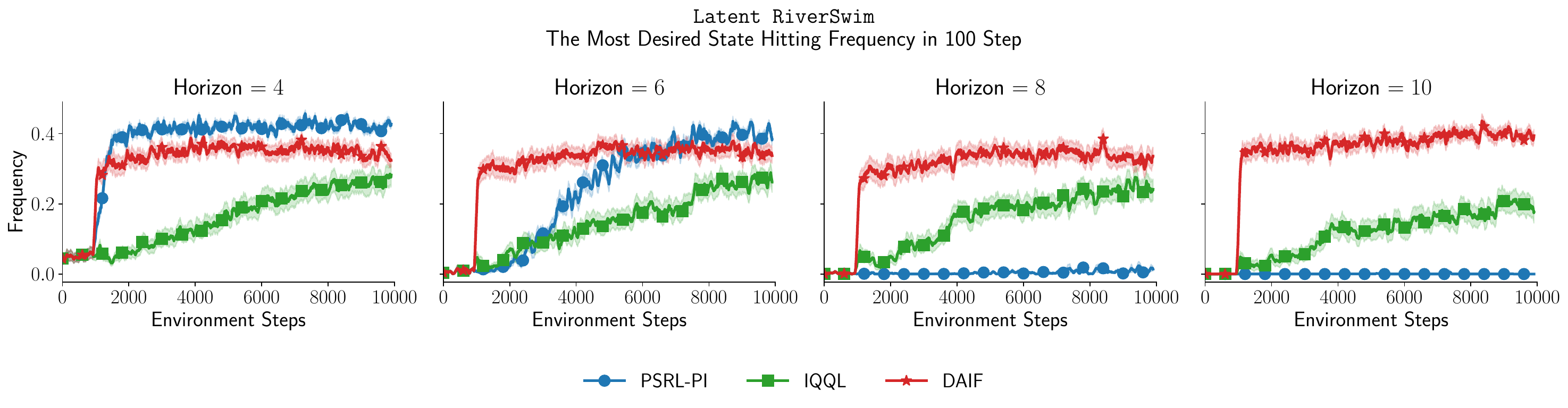}\\
    \caption{Top: RiverSwim. Bottom: Latent RiverSwim. Solid lines indicate the mean over $50$ seeds, and shaded regions represent the standard error.}
    \label{fig:tabular_results}
\end{figure}

\subsection{Continuous control environments} \label{appsec:cont_control}
\subsubsection{EvoGym} 
EvoGym \citep{bhatia2021evolution} provides continuous control tasks for soft robots with deformable bodies, which introduce non-stationarity into the control dynamics and make the learning problem more challenging. As our focus is on the control optimization problem rather than robot morphology design, we do not design or evolve robot structures. Instead, we adopt the highest-reward robot morphologies provided in the EvoGym repository.\footnote{\url{https://huggingface.co/datasets/EvoGym/robots}} These morphologies were identified by \citet{bhatia2021evolution} through co-optimization of robot design and control, and represent structures that are well-suited to each task. By using these previously optimized designs, we isolate the control learning problem and ensure a fair comparison across algorithms without confounding effects from morphology variation. From the available environments, we choose seven tasks spanning different difficulty levels and task types to provide a comprehensive evaluation.
\begin{description}
	\item[Locomotion tasks.] We include four locomotion environments: (1) \emph{Walker-v0} (easy), where the objective is to travel as far as possible in a single direction; (2) \emph{UpStepper-v0} (medium), where the robot must climb stairs of varying heights; (3) \emph{BidirectionalWalker-v0} (medium), where the target position changes dynamically during the episode, requiring the agent to learn locomotion in both directions; and (4) \emph{Traverser-v0} (hard), where the robot must cross a pit without sinking.
	\item[Object manipulation tasks.] We include three manipulation environments: (1) \emph{Thrower-v0} (medium), where a box is placed on top of the robot and must be thrown over obstacles of varying sizes; (2) \emph{Catcher-v0} (hard), where the robot must catch a rapidly falling and rotating box dropped from a height; and (3) \emph{Lifter-v0} (hard), where the robot must lift a box out of a hole.
	\item[Excluded environments.] From the original benchmark set, we exclude \emph{Carrier-v0}, \emph{BridgeWalker-v0}, \emph{Climber-v0}, and \emph{BeamSlider-v0}. We exclude \emph{Carrier-v0} because it is classified as easy, and our evaluation prioritizes more challenging tasks while retaining one easy task as a baseline. We exclude \emph{BridgeWalker-v0}, \emph{Climber-v0}, and \emph{BeamSlider-v0} because preliminary experiments showed minimal performance variance across algorithms in our initial seeds, limiting their discriminative value. Our final choice ensures balanced coverage across difficulty levels (one easy, three medium, three hard) and task types (four locomotion, three manipulation).
	\item[Inclusion of non-benchmark environment.] We include \emph{BidirectionalWalker-v0}, which is not part of the original benchmark set, because it introduces a qualitatively different locomotion challenge: the agent must learn to move in both directions in response to a dynamically changing goal, testing adaptability beyond standard unidirectional locomotion.
\end{description}

\cref{fig:evogym_results} shows the learning curves across EvoGym environments, while \cref{tab:results_evogym} reports the numerical results in terms of AULC and final return.

\begin{figure}%
    \centering
    \includegraphics[width=0.99\linewidth]{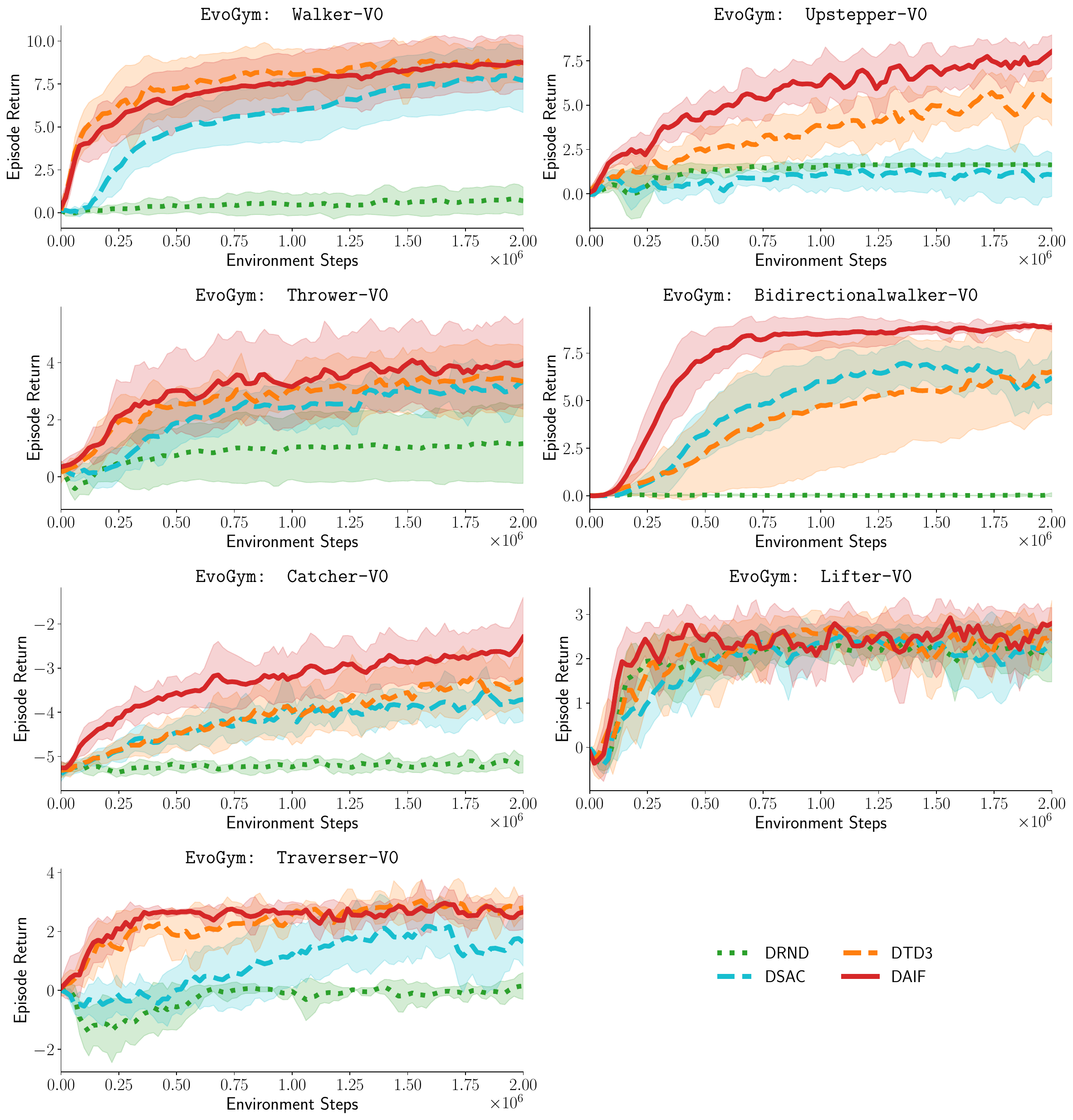}
    \caption{Learning curves for EvoGym environments.}
    \label{fig:evogym_results}
\end{figure}

\begin{table}%
    \centering
    \caption{Area Under the Learning Curve (AULC) and Final Return (mean $\pm$ standard deviation) averaged over $10$ repetitions on the EvoGym environments. The highest mean values are highlighted in bold, and results within one standard deviation distance to the highest mean performance are underlined.}
    \label{tab:results_evogym}
    \adjustbox{max width=0.995\textwidth}{%
        \begin{tabular}{@{}cccccc@{}}
        \toprule
            \multirow{2}{*}{Metric} & \multirow{2}{*}{Environment} & \multicolumn{4}{c}{Model} \\ \cmidrule(lr){3-6}
            & &
            \texttt{DRND} & \texttt{DSAC} & \texttt{DTD3} & \texttt{DAIF} \\ \midrule

 \multirow{7}{*}{\textsc{AULC} ($\uparrow$)} &
	\texttt{Walker-V0} & 
		$0.47 \pm \sd{0.54}$ & 
		$5.54 \pm \sd{1.32}$ & 
		$\bf 7.60 \pm \sd{1.24}$ & 
		$\underline{7.16 \pm \sd{1.47}}$ \\ 
& 
	\texttt{Upstepper-V0} & 
		$1.33 \pm \sd{0.17}$ & 
		$0.88 \pm \sd{0.60}$ & 
		$3.44 \pm \sd{0.92}$ & 
		$\bf 5.56 \pm \sd{0.77}$ \\ 
& 
	\texttt{Thrower-V0} & 
		$0.86 \pm \sd{0.99}$ & 
		$\underline{2.19 \pm \sd{0.48}}$ & 
		$\underline{2.71 \pm \sd{0.71}}$ & 
		$\bf 3.12 \pm \sd{1.17}$ \\ 
& 
	\texttt{Bidirectionalwalker-V0} & 
		$0.02 \pm \sd{0.01}$ & 
		$4.68 \pm \sd{0.86}$ & 
		$3.94 \pm \sd{2.09}$ & 
		$\bf 7.21 \pm \sd{0.55}$ \\ 
& 
	\texttt{Catcher-V0} & 
		$-5.21 \pm \sd{0.06}$ & 
		$-4.22 \pm \sd{0.23}$ & 
		$-4.06 \pm \sd{0.41}$ & 
		$\bf -3.31 \pm \sd{0.39}$ \\ 
& 
	\texttt{Lifter-V0} & 
		$1.98 \pm \sd{0.26}$ & 
		$1.88 \pm \sd{0.29}$ & 
		$\underline{2.11 \pm \sd{0.28}}$ & 
		$\bf 2.27 \pm \sd{0.23}$ \\ 
& 
	\texttt{Traverser-V0} & 
		$-0.26 \pm \sd{0.18}$ & 
		$0.92 \pm \sd{0.53}$ & 
		$\underline{2.30 \pm \sd{0.25}}$ & 
		$\bf 2.43 \pm \sd{0.14}$ \\ 
\midrule
\multirow{7}{*}{\textsc{Final Return} ($\uparrow$)} &
	\texttt{Walker-V0} & 
		$0.63 \pm \sd{0.75}$ & 
		$7.78 \pm \sd{1.74}$ & 
		$\bf 8.91 \pm \sd{0.86}$ & 
		$\underline{8.80 \pm \sd{1.55}}$ \\ 
& 
	\texttt{Upstepper-V0} & 
		$1.62 \pm \sd{0.06}$ & 
		$0.97 \pm \sd{1.28}$ & 
		$5.08 \pm \sd{2.07}$ & 
		$\bf 8.35 \pm \sd{0.78}$ \\ 
& 
	\texttt{Thrower-V0} & 
		$1.17 \pm \sd{1.40}$ & 
		$\underline{3.34 \pm \sd{0.80}}$ & 
		$\underline{3.38 \pm \sd{1.17}}$ & 
		$\bf 4.08 \pm \sd{1.64}$ \\ 
& 
	\texttt{Bidirectionalwalker-V0} & 
		$0.06 \pm \sd{0.15}$ & 
		$6.11 \pm \sd{1.89}$ & 
		$6.78 \pm \sd{2.41}$ & 
		$\bf 8.79 \pm \sd{0.37}$ \\ 
& 
	\texttt{Catcher-V0} & 
		$-5.23 \pm \sd{0.20}$ & 
		$-3.58 \pm \sd{0.39}$ & 
		$-3.27 \pm \sd{0.93}$ & 
		$\bf -2.22 \pm \sd{1.01}$ \\ 
& 
	\texttt{Lifter-V0} & 
		$2.21 \pm \sd{0.72}$ & 
		$\underline{2.34 \pm \sd{0.41}}$ & 
		$\underline{2.70 \pm \sd{1.06}}$ & 
		$\bf 2.74 \pm \sd{0.51}$ \\ 
& 
	\texttt{Traverser-V0} & 
		$0.16 \pm \sd{0.49}$ & 
		$1.54 \pm \sd{1.14}$ & 
		$\bf 2.85 \pm \sd{0.31}$ & 
		$\underline{2.62 \pm \sd{0.90}}$ \\ 
\bottomrule

        \end{tabular}
    }
\end{table}

\subsubsection{DeepMind Control Suite}
DeepMind Control Suite (DMC)~\citep{tassa2018deepmind} provides a diverse set of continuous control tasks built on the MuJoCo physics engine \citep{todorov2012mujoco} and is widely used as a benchmark for reinforcement learning algorithms. From the available environments, we choose eight tasks that span different robot morphologies and difficulty levels.

\begin{description}
    \item[Dog tasks.] The \emph{dog} robot has the largest state and action space dimensionality in DMC. We include \emph{dog-walk}, \emph{dog-trot}, and \emph{dog-run} to evaluate performance across increasing difficulty on this challenging morphology.
    \item[Run tasks across robots.] We include \emph{cheetah-run}, \emph{walker-run}, \emph{quadruped-run}, and \emph{humanoid-run}. These robots range from planar (\emph{cheetah}, \emph{walker}) to 3D (\emph{quadruped}, \emph{humanoid}) and vary in state dimensionality and control complexity. We choose the \emph{run} task as it is the most demanding locomotion variant for each robot.
\end{description}

\cref{fig:dmc_results} presents the learning curves across DMC environments, while \cref{tab:results_dmc} reports the corresponding numerical results in terms of AULC and final return.

\begin{figure}%
    \centering
    \includegraphics[width=0.99\linewidth]{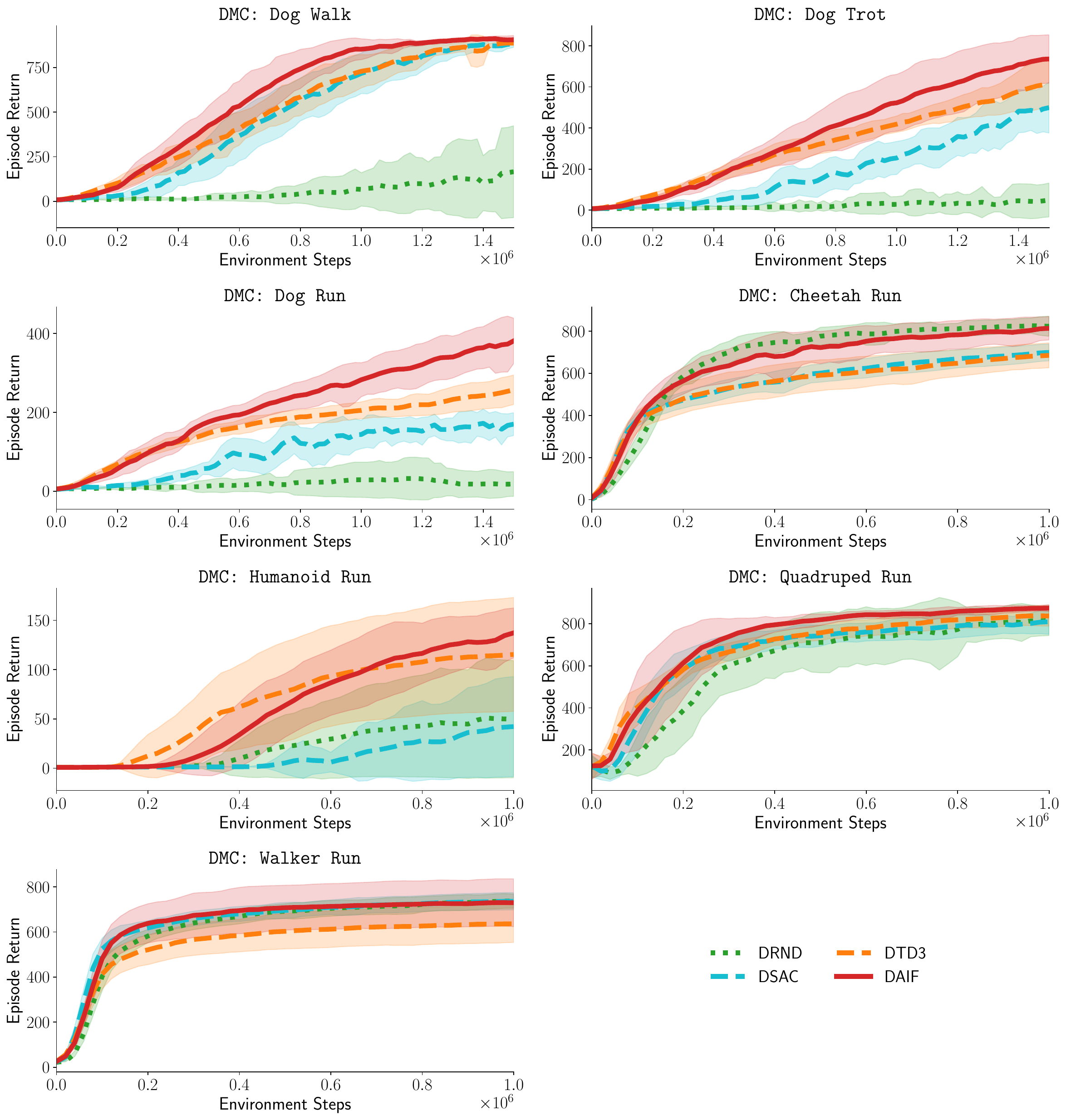}
    \caption{Learning curves for DeepMind Control suite environments.}
    \label{fig:dmc_results}
\end{figure}

\begin{table}%
    \centering
    \caption{Area Under the Learning Curve (AULC) and Final Return (mean $\pm$ standard deviation) averaged over $10$ repetitions on the DeepMind Control suite environments. The highest mean values are highlighted in bold, and results within one standard deviation distance to the highest mean performance are underlined.}
    \label{tab:results_dmc}
    \adjustbox{max width=0.995\textwidth}{%
        \begin{tabular}{@{}cccccc@{}}
        \toprule
            \multirow{2}{*}{Metric} & \multirow{2}{*}{Environment} & \multicolumn{4}{c}{Model} \\ \cmidrule(lr){3-6}
            & &
            \texttt{DRND} & \texttt{DSAC} & \texttt{DTD3} & \texttt{DAIF} \\ \midrule
\multirow{7}{*}{\textsc{AULC} ($\uparrow$)} &
	\texttt{Dog-Walk} & 
		$53.28 \pm \sd{74.94}$ & 
		$468.31 \pm \sd{69.04}$ & 
		$503.09 \pm \sd{53.94}$ & 
		$\bf 575.84 \pm \sd{50.52}$ \\ 
& 
	\texttt{Dog-Trot} & 
		$22.12 \pm \sd{30.23}$ & 
		$189.95 \pm \sd{51.30}$ & 
		$\underline{313.05 \pm \sd{28.88}}$ & 
		$\bf 368.90 \pm \sd{78.55}$ \\ 
& 
	\texttt{Dog-Run} & 
		$17.92 \pm \sd{21.89}$ & 
		$97.20 \pm \sd{19.02}$ & 
		$162.40 \pm \sd{15.52}$ & 
		$\bf 214.37 \pm \sd{31.33}$ \\ 
& 
	\texttt{Cheetah-Run} & 
		$\bf 666.37 \pm \sd{30.09}$ & 
		$548.75 \pm \sd{38.19}$ & 
		$540.36 \pm \sd{60.55}$ & 
		$\underline{646.65 \pm \sd{56.03}}$ \\ 
& 
	\texttt{Humanoid-Run} & 
		$22.02 \pm \sd{26.71}$ & 
		$12.06 \pm \sd{15.63}$ & 
		$\bf 66.22 \pm \sd{35.79}$ & 
		$\underline{61.56 \pm \sd{20.06}}$ \\ 
& 
	\texttt{Quadruped-Run} & 
		$605.52 \pm \sd{101.59}$ & 
		$658.64 \pm \sd{39.26}$ & 
		$\underline{681.68 \pm \sd{28.00}}$ & 
		$\bf 720.89 \pm \sd{47.59}$ \\ 
& 
	\texttt{Walker-Run} & 
		$\underline{615.66 \pm \sd{28.31}}$ & 
		$\underline{637.58 \pm \sd{31.87}}$ & 
		$546.43 \pm \sd{69.38}$ & 
		$\bf 638.04 \pm \sd{88.82}$ \\ 
\midrule
\multirow{7}{*}{\textsc{Final Return} ($\uparrow$)} &
	\texttt{Dog-Walk} & 
		$171.67 \pm \sd{256.17}$ & 
		$887.32 \pm \sd{21.58}$ & 
		$887.64 \pm \sd{17.45}$ & 
		$\bf 910.25 \pm \sd{20.23}$ \\ 
& 
	\texttt{Dog-Trot} & 
		$53.55 \pm \sd{88.03}$ & 
		$514.18 \pm \sd{121.13}$ & 
		$612.97 \pm \sd{112.84}$ & 
		$\bf 736.52 \pm \sd{117.58}$ \\ 
& 
	\texttt{Dog-Run} & 
		$19.39 \pm \sd{31.78}$ & 
		$184.70 \pm \sd{19.24}$ & 
		$260.35 \pm \sd{35.76}$ & 
		$\bf 382.27 \pm \sd{56.21}$ \\ 
& 
	\texttt{Cheetah-Run} & 
		$\bf 819.29 \pm \sd{58.72}$ & 
		$700.53 \pm \sd{39.77}$ & 
		$685.59 \pm \sd{61.70}$ & 
		$\underline{815.95 \pm \sd{52.73}}$ \\ 
& 
	\texttt{Humanoid-Run} & 
		$51.92 \pm \sd{62.32}$ & 
		$42.14 \pm \sd{51.26}$ & 
		$\underline{115.84 \pm \sd{57.97}}$ & 
		$\bf 138.56 \pm \sd{25.29}$ \\ 
& 
	\texttt{Quadruped-Run} & 
		$809.23 \pm \sd{81.57}$ & 
		$809.65 \pm \sd{60.31}$ & 
		$835.89 \pm \sd{46.84}$ & 
		$\bf 873.91 \pm \sd{17.54}$ \\ 
& 
	\texttt{Walker-Run} & 
		$\underline{736.29 \pm \sd{29.58}}$ & 
		$\bf 736.46 \pm \sd{38.15}$ & 
		$636.76 \pm \sd{81.69}$ & 
		$\underline{730.13 \pm \sd{106.01}}$ \\ 
\bottomrule

        \end{tabular}
    }
\end{table}

\subsubsection{DMC Vision.}
The environments and tasks are identical to those in the state-based DMC setting; however, the agent learns directly from pixel observations rather than proprioceptive state inputs. Following the experimental setup of \citet{yarats2022mastering}, we adopt their task difficulty categorization and hyperparameter configurations.

We evaluate on five tasks: \emph{cheetah-run}, \emph{quadruped-run}, \emph{walker-run}, \emph{reacher-hard}, and \emph{finger-turn-hard}. The three \emph{run} tasks test locomotion from pixels across different morphologies, while \emph{reacher-hard} and \emph{finger-turn-hard} provide non-locomotion tasks requiring precise control. We exclude \emph{dog} tasks because \citet{yarats2022mastering} did not evaluate on this environment and no reference configurations are available. We also exclude \emph{humanoid-run}, which is categorized as hard and requires $\num{30000000}$ frames, $10 \times$ increase over the medium-difficulty tasks we consider.

We use DrQ-v2 as the backbone for our implementation, adopting DDPG-style exploration. The DTD3 variant corresponds to the distributional extension of DrQ-v2. We use the learning curves provided in the official repository\footnote{\url{https://github.com/facebookresearch/drqv2}}. \cref{fig:dmc_vision_results} presents the learning curves across DMC Vision environments, while \cref{tab:results_dmc_vision} reports the corresponding numerical results in terms of AULC and final return.

\begin{figure}%
    \centering
    \includegraphics[width=0.99\linewidth]{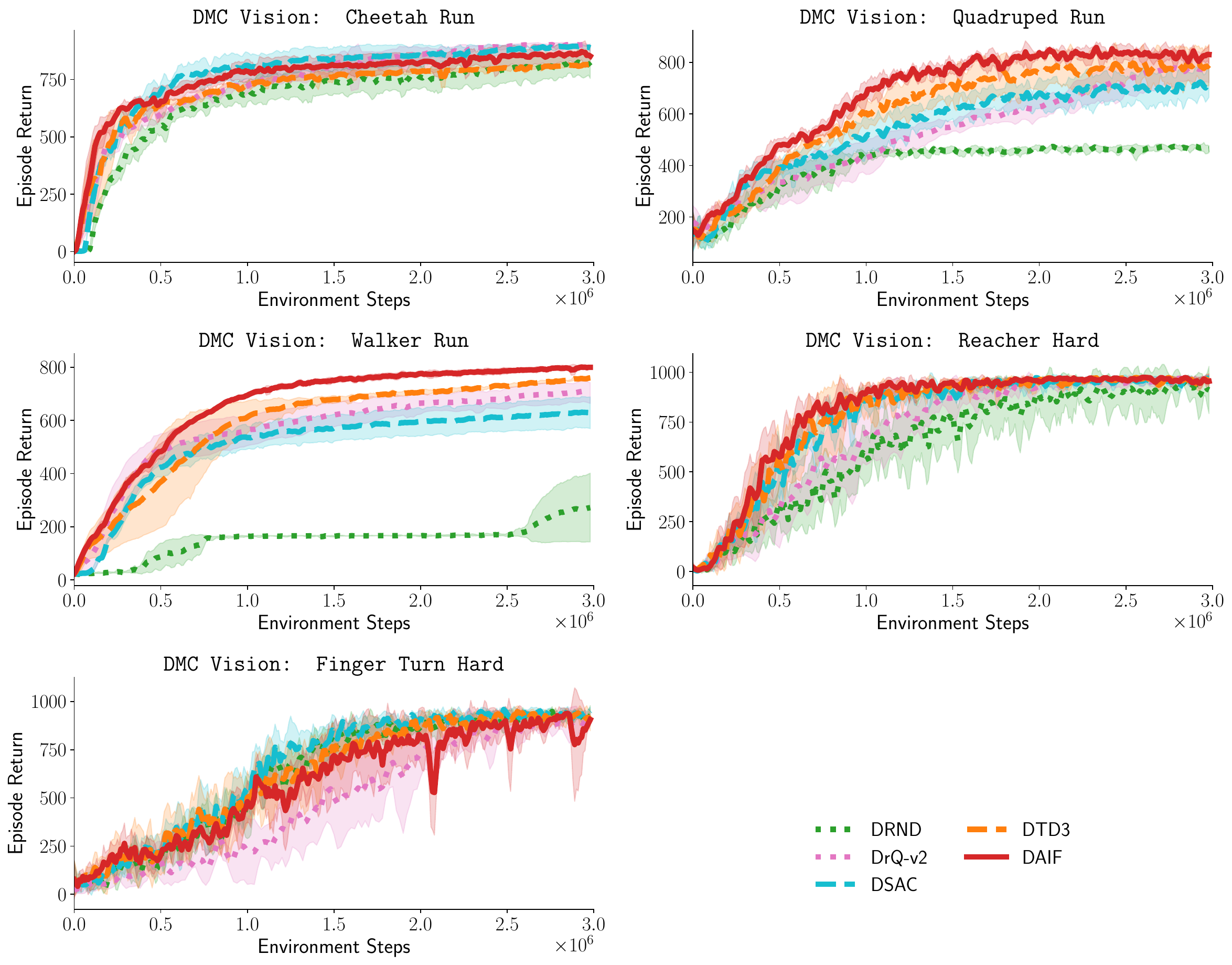}
    \caption{Learning curves for DeepMind Control suite vision environments.}
    \label{fig:dmc_vision_results}
\end{figure}

\begin{table}%
    \centering
    \caption{Area Under the Learning Curve (AULC) and Final Return (mean $\pm$ standard deviation) averaged over $5$ repetitions on the DeepMind Control suite vision environments. The highest mean values are highlighted in bold, and results within one standard deviation distance to the highest mean performance are underlined.}
    \label{tab:results_dmc_vision}
    \adjustbox{max width=0.995\textwidth}{%
        \begin{tabular}{@{}ccccccc@{}}
        \toprule
            \multirow{2}{*}{Metric} & \multirow{2}{*}{Environment} & \multicolumn{5}{c}{Model} \\ \cmidrule(lr){3-7}
            & &
            \texttt{DRND} & \texttt{DrQ-v2} & \texttt{DSAC} & \texttt{DTD3} & \texttt{DAIF} \\ \midrule

\multirow{5}{*}{\textsc{AULC} ($\uparrow$)} &
	\texttt{Cheetah-Run} & 
		$663.13 \pm \sd{38.98}$ & 
		$\underline{743.02 \pm \sd{9.45}}$ & 
		$\bf 770.48 \pm \sd{33.96}$ & 
		$707.78 \pm \sd{17.05}$ & 
		$\underline{756.45 \pm \sd{32.60}}$ \\ 
& 
	\texttt{Quadruped-Run} & 
		$402.28 \pm \sd{10.70}$ & 
		$526.64 \pm \sd{27.79}$ & 
		$550.19 \pm \sd{41.06}$ & 
		$613.70 \pm \sd{44.27}$ & 
		$\bf 676.43 \pm \sd{18.12}$ \\ 
& 
	\texttt{Walker-Run} & 
		$149.66 \pm \sd{10.64}$ & 
		$565.91 \pm \sd{39.92}$ & 
		$509.54 \pm \sd{42.86}$ & 
		$587.99 \pm \sd{41.96}$ & 
		$\bf 659.73 \pm \sd{6.49}$ \\ 
& 
	\texttt{Reacher-Hard} & 
		$640.13 \pm \sd{96.90}$ & 
		$706.24 \pm \sd{59.81}$ & 
		$773.13 \pm \sd{12.60}$ & 
		$\underline{783.05 \pm \sd{44.43}}$ & 
		$\bf 807.19 \pm \sd{27.02}$ \\ 
& 
	\texttt{Finger-Turn Hard} & 
		$\underline{627.22 \pm \sd{23.88}}$ & 
		$481.89 \pm \sd{82.63}$ & 
		$\bf 661.09 \pm \sd{35.50}$ & 
		$\underline{627.93 \pm \sd{59.60}}$ & 
		$580.21 \pm \sd{57.50}$ \\ 
\midrule
\multirow{5}{*}{\textsc{Final Return} ($\uparrow$)} &
	\texttt{Cheetah-Run} & 
		$800.57 \pm \sd{61.43}$ & 
		$\bf 894.25 \pm \sd{8.88}$ & 
		$880.97 \pm \sd{24.16}$ & 
		$814.80 \pm \sd{8.87}$ & 
		$859.80 \pm \sd{55.21}$ \\ 
& 
	\texttt{Quadruped-Run} & 
		$474.37 \pm \sd{8.94}$ & 
		$761.76 \pm \sd{51.41}$ & 
		$733.71 \pm \sd{55.48}$ & 
		$\underline{796.24 \pm \sd{71.23}}$ & 
		$\bf 832.98 \pm \sd{54.93}$ \\ 
& 
	\texttt{Walker-Run} & 
		$273.97 \pm \sd{130.92}$ & 
		$699.36 \pm \sd{39.57}$ & 
		$628.40 \pm \sd{59.10}$ & 
		$761.66 \pm \sd{5.90}$ & 
		$\bf 802.47 \pm \sd{5.40}$ \\ 
& 
	\texttt{Reacher-Hard} & 
		$903.28 \pm \sd{132.70}$ & 
		$\underline{969.82 \pm \sd{5.22}}$ & 
		$\bf 970.62 \pm \sd{13.02}$ & 
		$954.34 \pm \sd{36.76}$ & 
		$954.84 \pm \sd{36.89}$ \\ 
& 
	\texttt{Finger-Turn Hard} & 
		$918.26 \pm \sd{85.21}$ & 
		$941.10 \pm \sd{35.56}$ & 
		$\bf 965.00 \pm \sd{4.97}$ & 
		$882.76 \pm \sd{92.06}$ & 
		$920.58 \pm \sd{48.30}$ \\ 
\bottomrule

        \end{tabular}
    }
\end{table}

\subsubsection{Implementation details} \label{appsec:daif_details}
\begin{wrapfigure}{r}{0.36\textwidth}
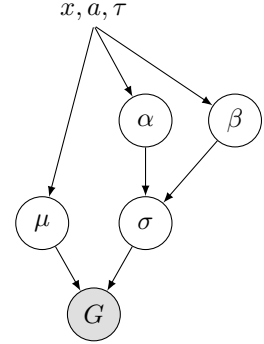

    \vspace{-\baselineskip}
    \centering
    \tikz{
    \node[obs](y){$G$};
    \node[latent, above left=of y,xshift=1.5em] (mu) {$\mu$};
    \node[latent, above right=of y,xshift=-1.5em] (sigma) {$\sigma$};
    \node[latent, above=of sigma, yshift=-1em](alpha) {$\alpha$};
    \node[latent, right=of alpha, xshift=-1.5em](beta) {$\beta$};
    \node[const,above=of y, yshift=7em](x) {};
    \node[const,above=0.1em of x](textx) {$x, a, \tau$};

    \edge{mu,sigma}{y};
    \edge{alpha,beta}{sigma};
    \edge{x}{mu,alpha,beta};
    }
    \caption{The plate diagram of the deep actor-critic implementation of DAIF. Each circle indicates a random variable. The observed random variable $G$ is shaded and the other ones are latent.}
    \vspace{-\baselineskip}
    \label{fig:plate}
\end{wrapfigure}
Given an observation tuple $(x, a, \tau)$, we define the input-dependent hyperpriors
\begin{align*}
    \mu \triangleq \mu_\phi(x, a, \tau), \qquad 
    \alpha \triangleq \alpha_\phi(x, a, \tau), \qquad 
    \beta \triangleq \beta_\phi(x, a, \tau),
\end{align*}
where $\mu_\phi$, $\alpha_\phi$, and $\beta_\phi$ are outputs of a neural network parameterized by $\phi$. We denote $\mbm \triangleq (\mu, \alpha, \beta)$. The corresponding generative model is defined as
\begin{align*}
    \sigma \sim \mathcal{IG}(\alpha, \beta), \qquad
    G | \mu, \sigma, \tau \sim \mathcal{ALD}(\mu, \sigma, \tau),
\end{align*}
where $\mathcal{ALD}$ denotes the asymmetric Laplace distribution with density
\begin{align*}
    f(G | \mu, \sigma, \tau) =
    \frac{\tau(1-\tau)}{\sigma}
    \exp\!\left(
    -\ell_\tau\!\left(\dfrac{G-\mu}{\sigma}\right)
    \right),
\end{align*}
and the check function is defined as $\ell_\tau(u) = \dfrac{|u| + (2\tau - 1)u}{2}$. The corresponding log-likelihood is given by
\begin{align*}
    \log f(G | \mu, \sigma, \tau)
    &= \log \tau(1-\tau) - \log \sigma
    - \frac{|G-\mu| + (2\tau - 1)(G-\mu)}{2\sigma}.
\end{align*}

Rather than drawing reparameterized samples from the inverse-gamma distribution, we analytically marginalize over $\sigma$. Taking the expectation of the log-likelihood with respect to the inverse-gamma prior yields
\begin{align*}
    \mathbb{E}_\sigma\!\left[\log f(G | \mu, \sigma, \tau)\right]
    &= \log \tau(1-\tau)
    - \mathbb{E}_\sigma[\log \sigma] \\
    & ~- \frac{1}{2}
    \left(
    |G-\mu| + (2\tau - 1)(G-\mu)
    \right)
    \mathbb{E}_\sigma\!\left[\frac{1}{\sigma}\right].
\end{align*}

We first compute $\mathbb{E}_\sigma[\log \sigma]$. For $\sigma \sim \mathcal{IG}(\alpha, \beta)$,
\begin{align*}
    \mathbb{E}_\sigma[\log \sigma]
    &= \int_0^\infty
    \log \sigma\,
    \frac{\beta^\alpha}{\Gamma(\alpha)}
    \sigma^{-(\alpha+1)}
    \exp\!\left(-\frac{\beta}{\sigma}\right)
    d\sigma.
\end{align*}
Applying the change of variables $a = \beta/\sigma$ yields
\begin{align*}
    \mathbb{E}_\sigma[\log \sigma]
    &= \frac{1}{\Gamma(\alpha)}
    \int_0^\infty
    (\log \beta - \log a)
    a^{\alpha-1}
    \exp(-a)\, da \\
    &= \log \beta - \psi(\alpha),
\end{align*}
where $\psi(\cdot)$ denotes the digamma function. Next, we compute $\mathbb{E}_\sigma[1/\sigma]$. Using the same change of variables,
\begin{align*}
    \mathbb{E}_\sigma\!\left[\frac{1}{\sigma}\right]
    &= \frac{\beta^\alpha}{\Gamma(\alpha)}
    \int_0^\infty
    \sigma^{-(\alpha+2)}
    \exp\!\left(-\frac{\beta}{\sigma}\right)
    d\sigma
    = \frac{\alpha}{\beta},
\end{align*}
where we used the identity $\Gamma(\alpha+1) = \alpha \Gamma(\alpha)$. Combining the above results, we obtain
\begin{align}
    \mathbb{E}_\sigma\!\left[\log f(G | \mu, \sigma, \tau)\right]
    &=
    \log \tau(1-\tau)
    - \log \beta
    + \psi(\alpha)
    - \frac{\alpha}{2\beta}
    \left(
    |G-\mu| + (2\tau - 1)(G-\mu)
    \right). \label{eq:daif_nll}
\end{align}

We use the negative of this expected log-likelihood as the critic training objective. To mitigate numerical instabilities, we enforce $\alpha$ and $\beta$ to be larger than $10$ by adding a constant offset of $10$ to the corresponding network outputs. In addition, we adopt the regularization strategy proposed by \citet{akgul2025overcoming} who placed weak hyperpriors
\begin{align*}
    p(\mu) = \mathcal{N}(0, 1000^2), \qquad p(\alpha) = \mathcal{G}(10, 0.1), \qquad p(\beta) = \mathcal{G}(10, 0.1),
\end{align*}
where the priors over $\alpha$ and $\beta$ are defined over the shifted parameters. We use this regularization with a regularization coefficient $\xi=0.001$.

We adopt the design choices of \citet{ma2025dsac} combined with action-noise exploration and delayed policy updates \citep{fujimoto2018addressing}. The full update procedure is presented in \cref{algo:daif}. The algorithm proceeds as follows: first, quantiles are randomly sampled and their midpoints are computed. Next, the target action is obtained from the lagged target actor with clipped Gaussian noise added. The parameters of the asymmetric Laplace distribution, $\mu$ and $\sigma$ are estimated for the current state-action pair and quantiles, where we model $\sigma \sim \mathcal{IG}(\alpha, \beta)$. Temporal difference errors are then computed for each critic using min-clipping. The critic loss is the negative log-likelihood weighted by pairwise quantile regression weights following \citet{ma2025dsac, yang2019fully}; specifically, line~23 weights each TD error by $(\tau_{i+1} - \tau_i)$, which corresponds to the quantile bin width and ensures that each quantile contributes proportionally to its probability mass under the target distribution. Regularization terms are computed using the priors, followed by parameter and Polyak updates for the critics. Every $d^{th}$ step (where $d=2$), the policy is updated: quantiles and midpoints are recomputed, an action is sampled from the online actor, and the actor maximizes the estimated mean $\mu$ over the quantile distribution, followed by parameter and Polyak updates. For DMC vision control environments, we also apply the random augmentation method of \citet{yarats2022mastering} to our baselines.

\begin{algorithm}[ht!]
\caption{Distributional Active Inference update}
\label{algo:daif}
\begin{algorithmic}[1]
    \STATE \textbf{Parameters:} $N$: number of quantiles, $\gamma$: discount factor, $\iota$: Polyak averaging parameter, $\pi_\sigma$: policy exploration noise scale, $\pi_c$: policy noise clip bound, $d$: policy update delay interval, and $\xi$: regularization coefficient
    \STATE \textbf{Networks:} $\pi_\theta$: policy network with parameters $\theta$, $\mu_{\phi_k}, \alpha_{\phi_k}, \beta_{\phi_k}$: critic networks ($k = 1, 2$) with parameters $\phi_k$, and $\bar{\theta}, \bar{\phi}_k$: target network parameters
    \STATE \textbf{Input:} Transition $(x, a, r, x')$ from replay buffer
    \STATE {\textcolor{gray}{\textit{--- Critic Update ---}}}
    \STATE Sample and sort $\{\tau_i\}_{i=0}^{N}, \{\tau_j\}_{j=0}^{N} \sim \mathcal{U}(0, 1)$; compute midpoints $\hat{\tau}_i, \hat{\tau}_j$
    \STATE $a' := \pi_{\bar{\theta}}(x') + \mathrm{clip}(\epsilon, -\pi_c, \pi_c)$ where $\epsilon \sim \mathcal{N}(0, \pi_\sigma^2 I)$ \hfill \textcolor{gray}{\textit{$\triangleright$ action-noise exploration}}
    \STATE Compute $\mu_j^k, \alpha_j^k, \beta_j^k$ from critic networks for $k \in \{1,2\}$, $j \in \{0,\ldots,N-1\}$
    \STATE Compute TD errors: $u_{ij}^k := r + \gamma \min_{k'} \mu_{\bar{\phi}_{k'}}(x', a', \hat{\tau}_i) - \mu_j^k$ for all $i, j, k$ \hfill \textcolor{gray}{\textit{$\triangleright$ Min-clipping}}
    \STATE {\textcolor{gray}{\textit{--- Critic Loss (via \cref{eq:daif_nll}) ---}}}
    \FOR{$k = 1, 2$}
        \STATE $\mathcal{L}(\phi_k) := - \dfrac{1}{N} \displaystyle\sum_{i,j} (\tau_{i+1} - \tau_i) \Bigg[ \log \hat{\tau}_j(1-\hat{\tau}_j) - \log \beta_j^k + \psi(\alpha_j^k) - \dfrac{\alpha_j^k}{2\beta_j^k} \Big( |u_{ij}^k| + (2\hat{\tau}_j - 1) u_{ij}^k \Big) \Bigg]$
        \STATE $\mathcal{L}(\phi_k) := \mathcal{L}(\phi_k) - \xi \dfrac{1}{N} \sum_{j} \left(\log{p(\mu_j^k)} + \log{p(\alpha_j^k)} + \log{p(\beta_j^k)} \right)$
    \ENDFOR
    \STATE Update $\phi_k$ via $\nabla_{\phi_k} \mathcal{L}(\phi_k)$; $\bar{\phi}_k := \iota \phi_k + (1 - \iota) \bar{\phi}_k$ for $k = 1, 2$ \hfill \textcolor{gray}{\textit{$\triangleright$ SGD + Polyak}}
    \STATE {\textcolor{gray}{\textit{--- Policy Update ---}}}
    \IF{$t \mod d = 0$}
        \STATE Sample and sort $\{\tau_i\}_{i=0}^{N} \sim \mathcal{U}(0, 1)$; compute midpoints $\hat{\tau}_i$
        \STATE $\mathcal{L}(\theta) := -\dfrac{1}{2} \displaystyle\sum_{k,i} (\tau_{i+1} - \tau_i) \mu_{\phi_k}(x, \pi_\theta(x), \hat{\tau}_i)$
        \STATE Update $\theta$ via $\nabla_\theta \mathcal{L}(\theta)$; $\bar{\theta} := \iota \theta + (1 - \iota) \bar{\theta}$ \hfill \textcolor{gray}{\textit{$\triangleright$ SGD + Polyak}}
    \ENDIF
\end{algorithmic}
\end{algorithm}

\subsubsection{Training}
\label{sec:appx_training}

\paragraph{Baselines. } We include DRND~\citep{yang2024exploration} as a representative exploration-driven model-free reinforcement learning baseline. We also consider DSAC~\citep{ma2025dsac}, the most recent state-of-the-art distributional actor–critic algorithm for continuous control. In addition, we introduce DTD3, a distributional extension of TD3~\citep{fujimoto2018addressing}, to directly assess whether DAIF improves over a distributional variant of TD3. For the DMC vision experiments, we include DrQ-v2~\citep{yarats2022mastering}, which achieves the highest reported scores among model-free methods on DMC vision environments. While model-based approaches such as DreamerV3~\citep{hafner2025mastering} achieve strong performance, they require learning a world model, which is orthogonal to our focus on model-free policy optimization. Notably, at $\num{1000000}$ environment steps ($\num{2000000}$ frames), DAIF achieves competitive or superior performance compared to DreamerV3 on several tasks: DAIF outperforms DreamerV3 on \emph{quadruped-run} ($813$ vs.\ $617$), \emph{walker-run} ($780$ vs.\ $684$), and \emph{reacher-hard} ($950$ vs.\ $862$), while achieving comparable performance on \emph{cheetah-run} ($823$ vs.\ $836$) and \emph{finger-turn-hard} ($867$ vs.\ $904$). This demonstrates that our model-free approach can match or exceed the performance of state-of-the-art model-based methods without the additional complexity of world model learning.

\paragraph{Critic architecture.} We adopt the critic architecture of \citet{ma2025dsac} for continuous control tasks, except for DMC Vision environments. The critic takes the state, action, and quantile fraction as inputs and outputs the parameters $(\mu, \alpha, \beta)$. The architecture follows a modular design with separate embeddings for the state–action pair and the quantile fraction, which are fused to produce the final quantile value estimates. Specifically, the state–action input is processed by a base network consisting of a linear layer with 256 hidden units, followed by layer normalization and a ReLU activation. The quantile fraction $\tau$ is embedded using a separate quantile network that maps a 128-dimensional quantile encoding to a 256-dimensional representation via a linear layer, layer normalization, and a sigmoid activation. The resulting embeddings are combined through element-wise interaction and passed to an output network comprising an additional hidden layer with 256 units, layer normalization, and ReLU activation, followed by a final linear layer that outputs scalar values for $\mu$, $\alpha$, and $\beta$.

For DMC Vision control tasks, in addition to the critic architecture described above, we employ the convolutional encoder of \citet{yarats2022mastering} to process high-dimensional observations. The encoder consists of four convolutional layers with 32 channels each. The first layer uses a stride of 2, while the remaining layers use stride 1. Each convolution is followed by a ReLU activation, and the resulting feature maps are flattened to obtain a compact latent representation. After the encoder, we use a Linear layer that maps the output of the encoder to a feature dimension of $50$, followed by layer normalization and $\tanh$ activation for both actor and critic.

\paragraph{Actor architecture.}
We parametrize the actor by a multilayer perceptron that maps the state representation to a continuous action. The network consists of two hidden layers with 256 units each, each followed by a ReLU activation. A final linear layer outputs the action vector, which is subsequently passed through a $\tanh$ activation to enforce bounded actions within the valid action range. This architecture provides sufficient expressive capacity for modeling complex continuous policies while remaining computationally efficient and stable in practice. For DMC Vision control tasks, the same actor architecture is used on top of the encoder, and a DDPG-style exploration scheme is applied following \citet{yarats2022mastering}.

\paragraph{Optimization.}
We use a learning rate of $3\times10^{-4}$ for both the actor and the critic, except for DMC Vision control tasks, where a learning rate of $1\times10^{-4}$ is used for the actor, critic, and encoder. We employ a replay buffer of size $\num{1000000}$ and a batch size of $256$, with $\num{10000}$ environment steps for warm-up. The discount factor is set to $0.99$, and Polyak averaging with a coefficient of $0.005$ is used for target network updates. We use two critics and a single actor, with target networks maintained for both. Policy smoothing is applied using Gaussian noise with standard deviation $0.1$, target policy noise with standard deviation $0.2$, and target noise clipping of $0.5$, together with a policy update delay of $2$. We randomly sample $8$ quantiles for value estimation, target computation, and actor training. The regularization coefficient is set to $\xi = 0.001$. For DMC Vision tasks, we use a frame stack of $3$ and an action repeat of $2$, and all other hyperparameters follow \citet{yarats2022mastering}\footnote{\url{https://github.com/facebookresearch/drqv2/tree/c0c650b76c6e5d22a7eb5f2edffd1440fe94f8ef}}.

\paragraph{Environment interactions.} We interact with the environments for $\num{1000000}$ steps for DMC control tasks, except for dog environments, where we use $\num{1500000}$ interaction steps. For EvoGym environments, we perform $\num{2000000}$ interaction steps, while for DMC Vision tasks we train for $\num{3000000}$ environment frames. All experiments are repeated $10$ times, except for DMC Vision tasks, where we use $5$ repetitions.

\paragraph{Ablation study on DAIF with SAC.} We also investigate the performance of DAIF with SAC version (DAIF-SAC) on three representative environments chosen from each suite. As shown in \cref{tab:results_ablation}, DAIF-SAC performs well across all three environments, consistently outperforming its base algorithm DSAC on Catcher-V0 and Dog-Run, while achieving comparable performance on Quadruped-Run. These results demonstrate that DAIF can be effectively applied to distributional versions of actor-critic algorithms beyond TD3. It is worth noting that in the DMC Vision tasks, DTD3 and DAIF are based on DrQ-v2, which employs DDPG as its backbone, further highlighting the flexibility of our approach across different actor-critic frameworks.

\begin{table}%
    \centering
    \caption{Area Under the Learning Curve (AULC) and Final Return (mean $\pm$ standard deviation) averaged over repetitions. The highest mean values are highlighted in bold, and results within one standard deviation of the best are underlined.}
    \label{tab:results_ablation}
    \adjustbox{max width=0.995\textwidth}{%
        \begin{tabular}{@{}cccccccc@{}}
        \toprule
            \multirow{2}{*}{Metric} & \multirow{2}{*}{Environment} & \multicolumn{5}{c}{Model} \\ \cmidrule(lr){3-8}
            & &
            \texttt{DRND} & \texttt{DrQ-v2} & \texttt{DSAC} & \texttt{DTD3} & \texttt{DAIF} & \texttt{DAIF-SAC} \\ \midrule

 \multirow{3}{*}{\textsc{AULC} ($\uparrow$)} &
        \texttt{Catcher-V0} & 
        $-5.21 \pm \sd{0.06}$ & 
        --- &
        $-4.22 \pm \sd{0.23}$ & 
        $-4.06 \pm \sd{0.41}$ & 
        $\bf -3.31 \pm \sd{0.39}$ &
        $-3.84 \pm \sd{0.23}$ \\ 
    &
        \texttt{Dog-Run} & 
		$17.92 \pm \sd{21.89}$ & 
        --- &
		$97.20 \pm \sd{19.02}$ & 
		$162.40 \pm \sd{15.52}$ & 
		$\bf 214.37 \pm \sd{31.33}$ &
        $138.08 \pm \sd{27.49}$ \\
    & 
        \texttt{Quadruped-Run} & 
        $402.28 \pm \sd{10.70}$ & 
        $526.64 \pm \sd{27.79}$ & 
        $550.19 \pm \sd{41.06}$ & 
        $613.70 \pm \sd{44.27}$ & 
        $\bf 676.43 \pm \sd{18.12}$ &
        $512.26 \pm \sd{78.71}$ \\ \midrule

\multirow{3}{*}{\textsc{Final Return} ($\uparrow$)} &
    \texttt{Catcher-V0} & 
        $-5.23 \pm \sd{0.20}$ & 
        --- &
        $-3.58 \pm \sd{0.39}$ & 
        $-3.27 \pm \sd{0.93}$ & 
        $\bf -2.22 \pm \sd{1.01}$ &
        $-3.53 \pm \sd{0.57}$ \\ 
    &
	\texttt{Dog-Run} & 
		$19.39 \pm \sd{31.78}$ & 
        --- &
		$184.70 \pm \sd{19.24}$ & 
		$260.35 \pm \sd{35.76}$ & 
		$\bf 382.27 \pm \sd{56.21}$ &
        $299.70 \pm \sd{50.61}$ \\ 
    & 
    \texttt{Quadruped-Run} & 
        $474.37 \pm \sd{8.94}$ & 
        $761.76 \pm \sd{51.41}$ & 
        $733.71 \pm \sd{55.48}$ & 
        $\underline{796.24 \pm \sd{71.23}}$ & 
        $\bf 832.98 \pm \sd{54.93}$ &
        $702.88 \pm \sd{92.04}$ \\ \bottomrule
    \end{tabular}
    }
\end{table}

\paragraph{Wall-clock time comparison.} We measure wall-clock training time for $\num{10000}$ environment interaction steps on the \emph{Dog-Run} task from the DMC suite, using a system equipped with a GeForce RTX 4090 GPU, an Intel Core i7-14700K CPU ($5.6$ GHz), and $96$ GB memory. DTD3 requires $59.69$ seconds, DAIF $66.81$ seconds, DSAC $75.05$ seconds, and DRND $81.69$ seconds. Relative to its baseline DTD3, DAIF incurs an overhead of approximately $12\%$ in wall-clock time. These results indicate that the performance improvements achieved by DAIF come at only a modest additional computational cost over its underlying actor–critic architecture.

\section{Extended Related Works}
\label{sec:extended-related-works}
\subsection{Distributional Reinforcement Learning}

Distributional RL explicitly models the full return distribution rather than only its expectation. While it was initially introduced for value-based algorithms~\citep{bellemare2017distributional, dabney2018distributional, dabney2018implicit, yang2019fully}, subsequent work extended these ideas to actor–critic architectures, particularly in continuous control settings~\citep{barth-maron2018distributional}. Various distributional actor–critic variants have been proposed to address specific limitations, including training instability arising from categorical critics~\citep{nam2021gmac, singh2022sample}, overestimation bias~\citep{kuznetsov2020controlling, li2024learning, doring2025addq}, and risk-sensitive decision making~\citep{ma2025dsac}, with several approaches focusing on risk-aware locomotion control~\citep{shi2024robust, schneider2024learning, zhang2025bipedalism}. Our work generalizes the distributional reinforcement learning framework by working with push-forward mappings of trajectory measures, where the new push-forward RL formulation extends the distributional RL. This generalization allows us to integrate the active inference framework into distributional RL, which in turn lets us develop model-free RL algorithms for active inference. The term \emph{push-forward} also appears in \citet{bai2025pacer}, but there it refers to transport-map parameterizations of action/return distributions from base noise, whereas our push-forward RL pushes forward trajectory measures within the Bellman theory. Therefore the overlap is primarily terminological. In our experiments, we compare our method against distributional RL baselines such as the distributional Soft Actor--Critic (DSAC)~\citep{ma2025dsac} and Implicit Quantile Q-Learning)~\citep{dabney2018implicit}. To our knowledge, our method is the first to integrate active inference into the distributional reinforcement learning framework.

\subsection{Directed Exploration in Reinforcement Learning}

Directed exploration explicitly guides reinforcement learning agents toward uncertain or informative states, proving essential in non-stationary environments with changing dynamics and sparse-reward settings lacking frequent feedback. While directed exploration has been extensively studied in the context of classical reinforcement learning algorithms~\citep{houthooft2016vime, bellemare2016unifying, pathak2017curiosity, ostrovski2017count, burda2018exploration, ciosek2019better, yang2024exploration}, distributional reinforcement learning also provides a natural foundation for exploration. In particular, distributional losses have been shown to induce intrinsic exploration through uncertainty-aware regularization~\citep{sun2025intrinsic}. Beyond this implicit effect, several works explicitly design exploration strategies within the distributional reinforcement learning framework. \citet{tang2018exploration} exploit epistemic uncertainty in return distributions for posterior sampling-like exploration; \citet{mavrin2019distributional} introduce Decaying Left Truncated Variance to derive an optimistic bonus from upper-tail quantiles with decaying optimism; \citet{zhou2021nondecreasing} propose Distributional Prediction Error, using Wasserstein distance between quantile distributions to construct exploration bonuses; \citet{oh2022risk} schedule risk levels from risk-seeking to risk-averse to balance exploration; and \citet{cho2023pitfall} randomize risk criteria through distributional perturbations to avoid biased optimism. While these methods explicitly design exploration bonuses, the exploration term in our distributional actor–critic objective arises naturally from variational free-energy minimization. For directed exploration baselines, we also consider DSAC, which has been shown to exhibit intrinsic exploration effects~\citep{sun2025intrinsic}. In addition, we include DRND~\citep{yang2024exploration}, a recent state-of-the-art exploration method for actor–critic algorithms.

\subsection{Active Inference and Reinforcement Learning}

Active inference has increasingly been examined in relation to reinforcement learning as an alternative that unifies perception, action, and learning by minimizing expected free energy (EFE) without explicit rewards or value functions~\citep{friston2009reinforcement, friston2015active, ueltzhoffer2018deep, sajid2021active, catal2019bayesian, tschantz2020reinforcement, fountas2020deep, millidge2020relationship, millidge2020deep, hafner2020action}. A growing body of empirical work shows that active inference can solve standard reinforcement learning benchmarks, often achieving performance comparable to classical reinforcement learning algorithms~\citep{cullen2018active, himst2020deep, tschantz2020scaling, markovic2021empirical, paul2021active}. Despite these advantages, particularly the intrinsic balancing of goal-seeking and information gain through epistemic value, most practical AIF agents remain \emph{model-based}, relying on learned generative/world models, belief (variational) inference, and often explicit planning or lookahead to evaluate and optimize EFE~\citep{lanillos2021active, tschantz2020scaling, fountas2020deep, schneider2022active, sajid2021exploration}. This reliance contributes to the computational burden of multi-step EFE estimation and has historically kept many scalable demonstrations either in discrete settings or in continuous-control implementations with relatively short effective planning horizons~\citep{lanillos2021active, dacosta2023reward}. Closest in spirit to our work, \citet{malekzadeh2024active} derive Bellman-style recursions for EFE and instantiate actor--critic updates; however, their formulation operates in belief space and still depends on learned belief representations and a learned world model for objective construction/evaluation, whereas our approach targets the model-free setting.

\end{document}